\documentclass{article}



\usepackage[preprint]{neurips_2025}


\usepackage[utf8]{inputenc} 
\usepackage[T1]{fontenc}    
\usepackage[pagebackref, breaklinks=true, colorlinks,citecolor=citecolor, linkcolor=linkcolor, bookmarks=false]{hyperref}
\usepackage{url}            
\usepackage{booktabs}       
\usepackage{amsfonts}       
\usepackage{nicefrac}       
\usepackage{microtype}      
\usepackage{xcolor}         
\usepackage{tablefootnote}
\usepackage{amsmath,amsfonts, bm}
\usepackage{enumitem}
\usepackage{multirow}
\usepackage{graphicx}
\usepackage{wrapfig}
\usepackage{subcaption}
\usepackage{caption}
\usepackage{amssymb}

\vspace{-2pt}
\title{Implicit Reward as the Bridge: A Unified View of SFT and DPO Connections}

\vspace{-5pt}
\author{%
  \textbf{Bo Wang}${}^{1}$\thanks{Equal Contribution.} \quad 
  \textbf{Qinyuan Cheng}${}^{1\,*}$ \quad 
  \textbf{Runyu Peng}${}^{1\,*}$ \quad 
  \textbf{Rong Bao}${}^{1}$ \quad 
  \textbf{Peiji Li}${}^{1}$ \\
  \textbf{Qipeng Guo}${}^{2}$ \quad 
  \textbf{Linyang Li}${}^{2}$ \quad 
  \textbf{Zhiyuan Zeng}${}^{1}$ \quad 
  \textbf{Yunhua Zhou}${}^{2}$ \quad 
  \textbf{Xipeng Qiu}${}^{1}$\thanks{Corresponding author} \\
  ${}^1$School of Computer Science, Fudan University \quad ${}^2$ Shanghai Artificial Intelligence Laboratory \\
  \texttt{\{bwang22, chengqy21, rypeng22, rbao22\}@m.fudan.edu.cn} \\ \texttt{\{linyangli19, xpqiu\}@fudan.edu.cn} \\ \texttt{\{guoqipeng, zhouyunhua\}@pjlab.org.cn}
}

\usepackage{amsthm}
\newcommand{\KL}{D_{\mathrm{KL}}}

\usepackage{color}
\usepackage{colortbl}
\usepackage{xcolor}

\definecolor{blgrey}{rgb}{0.6,0.6,0.6}
\definecolor{bblue}{rgb}{0.855,0.933,0.98}
\definecolor{dblue}{HTML}{5297D6}
\definecolor{gainred}{rgb}{0.1,0.5,0.3}
\definecolor{citecolor}{HTML}{0071BC}
\definecolor{linkcolor}{HTML}{ED1C24}

\makeatletter
\@ifundefined{theorem}{}{}
\makeatother

\newtheorem{definition}{Definition}
\newtheorem{lemma}{Lemma}
\newtheorem{theorem}{Theorem}
\newtheorem{assumption}{Assumption}
\begin{document}

\definecolor{front-color}{HTML}{d9eddf}
\definecolor{lightblue}{rgb}{0.88, 0.95, 0.98}
\definecolor{lightpink}{rgb}{0.98, 0.88, 0.95}
\definecolor{superlightred}{rgb}{0.99, 0.92, 0.92}
\definecolor{airforceblue}{rgb}{0.828,0.914,0.910}
\definecolor{mygrey}{HTML}{EBEBEB}

\maketitle


\begin{abstract}

Post-training processes are essential phases in grounding pre-trained language models to real-world tasks, with learning from demonstrations or preference signals playing a crucial role in this adaptation. We present a unified theoretical framework bridging Supervised Fine-Tuning (SFT) and preference learning in Large Language Model (LLM) post-training. Through rigorous mathematical derivation, we demonstrate that both SFT and preference learning methods like Direct Preference Optimization (DPO) operate within the same optimal policy-reward subspace, with SFT representing a special case of implicit reward learning. Our analysis reveals a critical limitation in conventional SFT: the KL divergence term in distribution matching becomes constant with respect to the policy during optimization, failing to constrain model updates. To address this, we propose a simple yet effective learning rate reduction approach that yields significant performance improvements (up to \textbf{25\%} relative gain and \textbf{6\%} absolute win rate increase in instruction following tasks. Additionally, we derive alternative SFT objectives from various f-divergence functions that preserve the KL term during optimization, further enhancing post-DPO model performance. Finally, we extend the theoretical relationship between LLM logits and Q-functions from preference learning to the SFT context, providing mathematical derivations and experimental validation. 

\end{abstract}

\vspace{-2pt}
\section{Introduction} \label{sec:intro}
\vspace{-4pt}
\begin{figure}[h]
    \centering
    \includegraphics[width=0.85\linewidth]{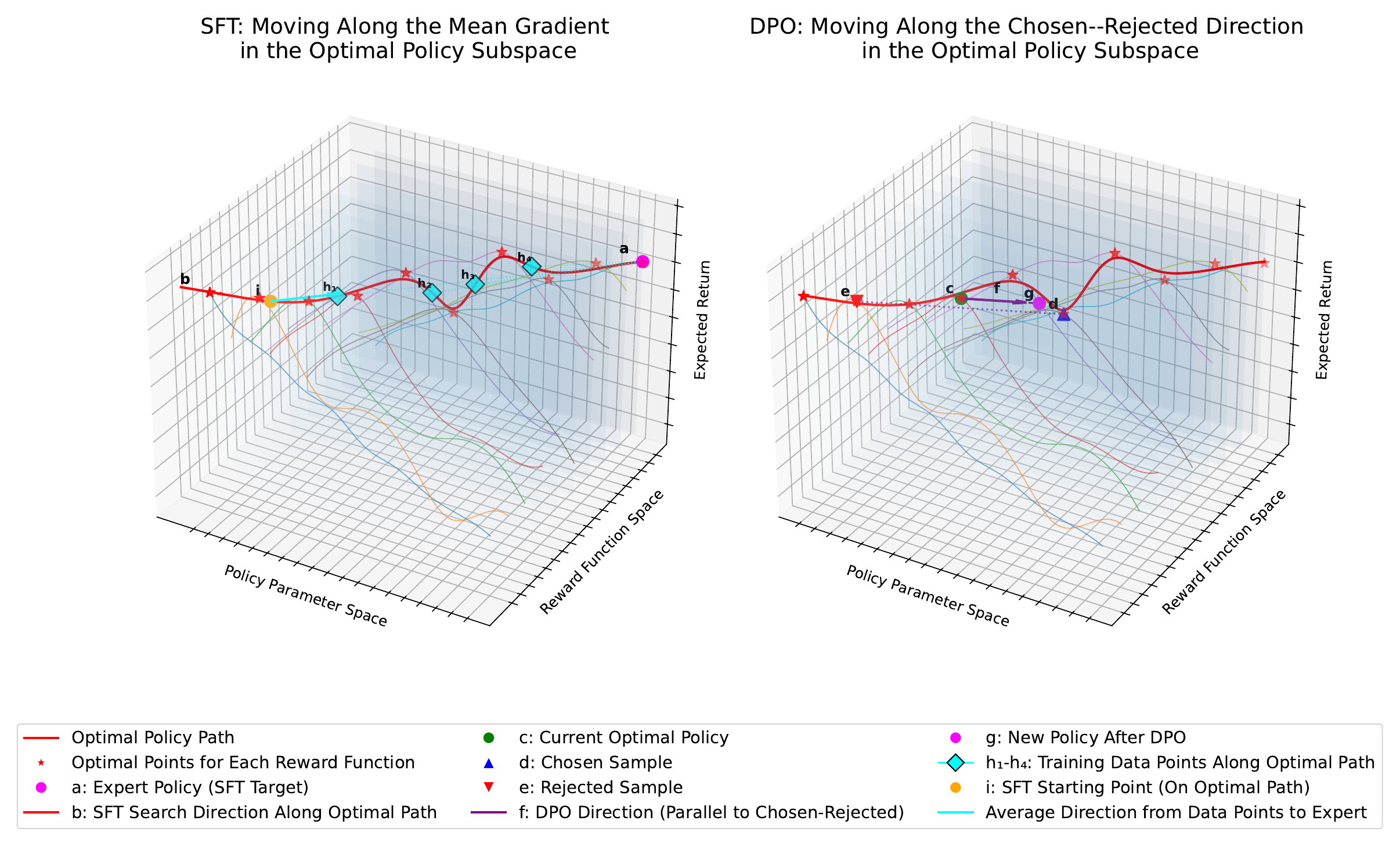}
    \caption{Schematic diagram: SFT and DPO are optimizing implicit rewards in the optimal policy subspace. The $x$-dimension refers to the space of all possible policy models. The $y$-dimension represents the space of all possible reward functions. Given a specific reward function, each possible policy will have its own expected return, forming a curve in the policy-return plane. The optimal policies corresponding to different reward functions constitute a subspace. \textbf{Left}: The SFT process searches within this subspace and moves along the average direction indicated by the demonstrations. \textbf{Right}: Likewise, DPO operates within this same subspace, but navigates along the direction vector formed by the chosen and rejected examples.}
    \label{fig:sft}
    \vspace{-10pt}
\end{figure}

Post-training represents a critical phase in grounding Large Language Models (LLMs) in real-world applications. After accumulating general prior knowledge from numerous pre-training corpora, post-training aims to leverage the potential of LLMs for different needs, such as following natural language instructions \cite{InstructGPT, Llama3_tech_report, qwen_tech_report,deepseek_tech_report, Mistral_tech_report}. Two principal methodological approaches dominate the post-training landscape. The first approach learns from expert demonstrations \cite{LIMO, Less_is_more}, commonly known as imitation learning, which in the context of LLMs is typically referred to as Supervised Fine-Tuning (SFT). The second approach focuses on learning from environmental signals, primarily through Reinforcement Learning methods \cite{deepseek_r1, Deepseek_math, Qwen_math}.

Within the post-training landscape, preference signals have emerged as particularly valuable forms of feedback, attracting substantial research attention\cite{Deep_RLHF, InstructGPT}. Preference learning typically follows a two-stage process (hereafter referred to as \textbf{sequential training}): an initial stage of SFT followed by preference optimization methods like Direct Preference Optimization (DPO) \cite{DPO}. However, the relationship between these critical stages remains predominantly understood through empirical observations rather than theoretical foundations, and SFT is often treated merely as a preparatory warm-up step \cite{SPIN}. Despite the widespread adoption of this sequential paradigm, a significant gap persists in theoretical perspectives regarding how these two approaches fundamentally relate to one another. While previous research \cite{Learning_dynamic} has extensively explored various aspects of LLM learning dynamics, the theoretical connections between SFT objectives and preference learning frameworks have received insufficient attention, limiting our understanding of their combined effectiveness in the post-training process.

To mitigate this gap, we prove that implicit reward learning can be utilized as a unified view connecting SFT and preference-learning processes. Previous work \cite{DPO} established that preference learning in the second stage can operate through implicit rewards. In our research, we revisit the distribution matching objective and apply necessary adjustments for post-training. We provide a comprehensive mathematical proof following earlier works \cite{IQ_learn, Imitation_learning_via_scaleable_RL}. Our proof demonstrates that the conventional SFT objective represents a special case of learning implicit rewards. Figure \ref{fig:sft} illustrates our theoretical conclusion. The optimal policy for each possible reward function forms a policy-reward subspace, with both SFT and DPO operating within this subspace.

This theoretical framework of implicit rewards yields several novel insights. By framing SFT as a training target derived from distribution matching, we uncover a key insight: the crucial KL term in the objective functions merely serves as a zero-order component. Since this term remains constant with respect to $\pi$, it imposes no constraints on model updates following differentiation. We propose a simple yet efficient heuristic to mitigate this issue by reducing the learning rate. Furthermore, we identify alternative training objectives by choosing different $f$-divergence derivative functions for distribution matching that preserve the KL term during optimization and show their effectiveness empirically. Finally, we demonstrate that LLM logits can function as a Q-function corresponding to implicit rewards during the SFT process. This extends the theoretical framework in \cite{from_r_to_q}, which primarily established this relationship in the DPO setting, while our work reveals similar mathematical structures within the SFT process. Our empirical training results align strongly with these theoretical predictions in the instruction-following tasks.

Our main contributions are as follows:
\begin{enumerate}
    \item We revisit the distribution matching objective and mathematically prove that SFT also learns an implicit reward function identical to that of DPO. This provides a unified theoretical view that clarifies the relationship between SFT and preference learning.
    
    \item Within this theoretical framework, we provide a simple yet effective approach by reducing the learning rate during the SFT phase to mitigate the absence of the KL term. The KL term typically ensures that the policy does not deviate excessively from the base model, promoting stable and efficient learning. This significantly improves results, with relative improvements up to 25\% (absolute win rate increases of up to 6\%, from 15.6\% to 21.5\%). 
    
    \item We also propose several alternative SFT objectives derived from other $f$-divergence functions for LLMs. We demonstrate further improvements in model performance after DPO training (hereafter referred to as \textbf{post-DPO}), which yields up to 4\% absolute win rate improvements.
    
    \item We mathematically extend the relationship between LLM logits and Q-functions from the DPO context to the SFT process, supporting this extension with indirect experimental evidence. This formulation enables us to efficiently estimate state values under the model's implicit reward and provides deeper insights into the role of SFT in the alignment process.
\end{enumerate}
\section{Related Work}

\subsection{Inverse Reinforcement Learning}
First formalized by \cite{Andrew_IRL}, Inverse Reinforcement Learning assumes that the expert represents the optimal policy under a certain reward function. We provide a detailed introduction in Appendix \ref{app:related_work}.

Instead of using the Bellman equation like \cite{IQ_learn, Imitation_learning_via_scaleable_RL}, we apply closed-form solutions and avoid inner-loop optimization. In comparison with \cite{GEM}, our work utilizes a more general $f$-divergence formulation \cite{fgan} and does not introduce negative samples. Unlike \cite{SequenceMatch}, we primarily examine the relationship between SFT and DPO in the sequential training context of large language models.
\vspace{-3pt}
\subsection{Implicit Reward in LLMs}
The typical Reinforcement Learning from Human Feedback (RLHF) methods generally involve substantial computational budgets (e.g., PPO \cite{PPO} uses four models in the training process), and DPO \cite{DPO} was proposed to reduce the computational overhead of the RL process. It proved that the maximization problem has a closed-form solution, allowing us to perform reward modeling directly on the implicit reward. Many related works like IPO \cite{IPO}, KTO \cite{KTO}, SimPO \cite{SimPO}, and R-DPO \cite{R_DPO} have continued development along this path. \cite{from_r_to_q} has also established the relationship between Q-functions and LLM-logits during the DPO process. \cite{QSFT} treats logits as Q-values and trains a Q-head on robotic tasks. We demonstrate that analogous structures exist in the SFT process. 
\vspace{-3pt}
\subsection{Post-Training Theory Analysis for LLMs}
There are also several analyses examining relationships in post-training phases. \cite{All_roads_likelihood} analyzes the isomorphic relationship between implicit rewards and reward models. It connects implicit rewards with the generation-verification gap. \cite{Learning_dynamic} pays more attention to the learning dynamics during different training phases. In our work, we focus more on the relationship between training objectives during the post-training process and use implicit rewards as a medium to understand them.


\section{Unified View between SFT and DPO} \label{sec:Methods}
\paragraph{Token-Level MDP in LLM} 
In language models, the Markov Decision Process (MDP) applies to token-level decisions. The state $s \in S$ represents the context, actions $a \in A$ are possible next tokens, and the policy $P(a|s)$ gives token probabilities. State transitions $T(s'|s,a)$ are deterministic, with $s' = s \oplus a$ (token appended to context). The model receives reward $r(s,a)$ for each choice, continuing until reaching a terminal state. $\gamma$ is the discount factor. This framework formalizes how language models make sequential decisions during text generation.

\begin{definition}[Occupancy Measure \cite{IQ_learn}]
\vspace{-3pt}
The occupancy measure of a policy $\pi$ is defined as:  
\[
\rho(s) = (1-\gamma) \sum_{i=0}^{\infty} \gamma^i P(s_i = s|\pi),
\]
where $P(s_i = s \mid \pi)$ denotes the probability of visiting state $s$ at time step $i$ under policy $\pi$.
\end{definition}


\begin{definition}[State-Action Distribution \cite{IQ_learn}]
The state-action distribution of a policy $\pi$ is given by:
\[
\mu(s, a) = \pi(a|s) \rho(s),
\]
where $\mu(s, a)$ represents the stationary distribution over state-action pairs induced by $\pi$.
\end{definition}

\subsection{Distribution Matching in Post-training} \label{subsec: KL-Imitation}

A well-established training objective in imitation learning is \textbf{distribution matching} \cite{MaxEnt_IRL, IQ_learn}. It focuses on minimizing the $f$-divergence between the expert’s state-action distribution $\mu_E$ and that of the policy model $\mu_\pi$, while incorporating an entropy regularization term to promote exploration. However, entropy is not entirely suitable in the post-training scenario for LLMs. The use of probabilistic averaging over the whole vocabulary could potentially damage the natural language priors established during the pretraining of the base model. Therefore, we modify the regularization term from entropy to the Kullback-Leibler (KL) divergence between the base model $\pi_{ref}$ and the policy model $\pi$.
\begin{align}\label{obj:Distribution Matching}
    \min_\pi \ D_f(\mu_\pi \| \mu_E) \underbrace{+ \beta \KL (\pi \| \pi_{ref})}_{- \beta \mathcal{H}(\pi) \text{ in traditional setting}},
\end{align}
where $D_f(\cdot \| \cdot)$ denotes the $f$-divergence, and $\mathcal{H}(\pi)$ is the entropy of the policy. $\beta$ is the coefficient of the regularization term and often serves as a hyperparameter.

A similar object was also introduced in \cite{All_roads_likelihood}. We approach this concept from a different theoretical perspective and provide additional clarification here:

\begin{enumerate}[label=\arabic*)]
\item \textbf{From the Cross-Entropy Term Perspective:} The KL divergence can be split into the entropy term and the cross-entropy term: $\mathcal{H}(\pi, \pi_{\text{ref}}) - \mathcal{H}(\pi)$. As the base model has converged during the pretraining phase and has been exposed to extensive natural language data, the cross-entropy term should not be large. This implies that the policy model should not deviate from the domain of natural language when maximizing exploration.
\item \textbf{From a KL Divergence Perspective:} The base model obtained from the pretraining phase already possesses sufficient quality and contains additional knowledge. Therefore, when minimizing the divergence for distribution matching, we need to preserve the intrinsic properties of the base model.
\end{enumerate}

\subsection{Imitation Learning as Implicit Reward Discovery} \label{SFT_is_special_case}
Following the derivation process of non-adversarial imitation learning \cite{IQ_learn}, the training objective can be expressed as an equivalent min-max problem. We have the following key result:

\begin{theorem}[Equivalent Objective for Distribution Matching]
Learning a policy that minimizes the $f$-divergence between expert and policy state-action distributions is equivalent to \textbf{first learning an optimal policy under an arbitrary reward function, then optimizing a function of that reward function}:
\begin{align}
    -&\min_r [\mathbb{E}_{\mu_E}[f^*(-r)] + \underbrace{\max_\pi \mathbb{E}_{\mu_\pi}[r] - \beta \KL(\pi \| \pi_{\text{ref}})}_{\text{Has closed-form solution}}], \label{obj:min_max_target}
\end{align}
where $f^*$ is the convex conjugate function corresponding to the chosen $f$-divergence, $\mu_\pi$ is the state-action distribution of the policy being learned, and $\mu_E$ is the expert's state-action distribution. $r$ is the independent variable of $f^*$ and is commonly interpreted as the reward function.
\end{theorem}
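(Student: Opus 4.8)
The plan is to establish the equivalence in two stages: first convert the $f$-divergence term into a variational (Fenchel-conjugate) form so that a reward-like dual variable appears, and then solve the remaining entropy-regularized policy optimization in closed form to show that the inner maximization collapses to a tractable expression. Concretely, I would start from the objective in \eqref{obj:Distribution Matching}, $\min_\pi D_f(\mu_\pi \| \mu_E) + \beta \KL(\pi \| \pi_{\text{ref}})$, and rewrite the $f$-divergence using its convex-conjugate representation: for a convex $f$ with conjugate $f^*$, one has $D_f(\mu_\pi\|\mu_E) = \sup_{g} \left( \mathbb{E}_{\mu_\pi}[g] - \mathbb{E}_{\mu_E}[f^*(g)] \right)$, where $g$ ranges over functions on state-action pairs. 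Renaming the dual variable as $-r$ (so $g = -r$), this becomes $\sup_r \left( -\mathbb{E}_{\mu_\pi}[r] - \mathbb{E}_{\mu_E}[f^*(-r)] \right)$. Substituting into the objective and exchanging the order of the $\min_\pi$ and $\sup_r$ (justified by a minimax argument, which I will need to invoke carefully) turns the problem into $-\min_r \left( \mathbb{E}_{\mu_E}[f^*(-r)] + \min_\pi \left( \mathbb{E}_{\mu_\pi}[r] + \beta \KL(\pi\|\pi_{\text{ref}}) \right) \right)$; flipping the sign of $r$ inside the inner problem (or equivalently absorbing the sign) yields the $\max_\pi \mathbb{E}_{\mu_\pi}[r] - \beta\KL(\pi\|\pi_{\text{ref}})$ form stated in \eqref{obj:min_max_target}.

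The second stage is to argue that the inner term $\max_\pi \mathbb{E}_{\mu_\pi}[r] - \beta \KL(\pi\|\pi_{\text{ref}})$ has a closed-form solution, which is the parenthetical claim in the theorem. Here I would write $\mathbb{E}_{\mu_\pi}[r] = \mathbb{E}_{\rho_\pi,\, a\sim\pi(\cdot|s)}[r(s,a)]$ using the State-Action Distribution definition, and recognize this as an entropy-regularized (here KL-regularized) reinforcement learning problem. Following the standard soft-RL / DPO-style derivation, the optimal policy is the Gibbs form $\pi^*(a|s) \propto \pi_{\text{ref}}(a|s)\exp(Q^*(s,a)/\beta)$ where $Q^*$ solves the associated soft Bellman equation, and substituting this optimizer back gives a value-function expression that depends only on $r$ (and $\pi_{\text{ref}}$), not on $\pi$ — hence "closed-form." I would cite the soft value iteration / KL-regularized control literature (and the analogous step in \cite{DPO} and \cite{IQ_learn}) for the precise closed form rather than rederiving it. The upshot is that the entire bracket in \eqref{obj:min_max_target} is a function of $r$ alone, so the outer $\min_r$ is a genuine reward-learning problem; interpreting the optimal $\pi$ for each $r$ as living on the optimal policy–reward manifold then gives the "first learn an optimal policy under an arbitrary reward, then optimize a function of that reward" reading.

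The main obstacle I anticipate is rigorously justifying the swap of $\min_\pi$ and $\sup_r$ (the minimax interchange). This requires a convexity/concavity structure — the objective should be convex in the occupancy measure $\mu_\pi$ (one typically reparametrizes in terms of $\mu_\pi$ rather than $\pi$, since $\mu_\pi$ ranges over a convex set of valid occupancy measures and $D_f$ plus the linear-in-$\mu_\pi$ reward term is convex there) and concave (in fact linear) in $r$ — together with a compactness or Slater-type condition to invoke Sion's minimax theorem. A second, more technical subtlety is the domain of the dual variable $r$: the conjugate representation of $D_f$ is only valid for $r$ in the effective domain of $f^*$, so the $\min_r$ is implicitly constrained, and I would need to note that (or assume $f^*$ has full domain, as with the reverse-KL choice). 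I would also need to confirm that the KL-regularization term, being lower-semicontinuous and convex in $\pi$, does not break the interchange. Once these regularity points are dispatched, the algebraic manipulations are routine, and the closed-form inner solution is a direct appeal to known soft-RL results; so the entire weight of the proof rests on setting up the minimax step cleanly, ideally by working in occupancy-measure space from the outset.
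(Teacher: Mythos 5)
Your proposal follows essentially the same route as the paper's proof: rewrite $D_f$ via its Fenchel-conjugate variational representation, substitute $g=-r$, exchange the $\min_\pi$ and $\max_r$ by a saddle-point/minimax argument (which you treat more carefully than the paper, in occupancy-measure space with Sion's theorem and attention to the domain of $f^*$), and then invoke the known closed-form solution of the KL-regularized inner maximization from the soft-RL/DPO literature. The only blemish is a sign slip in your intermediate display, where $\min_\pi\left(\mathbb{E}_{\mu_\pi}[r]+\beta\KL(\pi\|\pi_{\text{ref}})\right)$ should read $-\max_\pi\left(\mathbb{E}_{\mu_\pi}[r]-\beta\KL(\pi\|\pi_{\text{ref}})\right)$, a bookkeeping issue your "absorbing the sign" remark effectively resolves.
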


We provide a detailed proof in Appendix \ref{app:object_eq}. Here, the reward function $r$ is not yet related to implicit rewards but rather represents an arbitrary function. The commonly used SFT loss still differs from distribution matching approaches in fundamental ways. However, \textbf{this formulation establishes a connection between finding a policy model and identifying a suitable reward function}. Although the equivalent objective is formulated as a bi-level optimization problem, the latter part, which we denote as $J(\pi)=\mathbb{E}_{\mu_\pi}[r] - \beta \KL(\pi \| \pi_{\text{ref}})$, has a closed-form solution as demonstrated by \cite{from_r_to_q}. We leverage this established result in our approach.
\begin{align}
\pi^* =\arg \max_\pi J(\pi), \quad J(\pi^*) = V^*(s_0)
\end{align}
where $\pi^*(a \mid s)$ is the optimal policy. $V^*$ is the value function of the optimal policy.


\begin{lemma}[Relationship between Reward and Policy\cite{from_r_to_q}, \textbf{Implicit Reward}] The relationship between reward and corresponding optimal policy is :
\begin{align}
r(x, y) &= \beta \log \frac{\pi^*(y \mid x)}{\pi_{\text{ref}}(y \mid x)} + V^*(s_0) - V^*(s_t), \label{eq: r_and_pi}
\end{align}
where $r(x, y)$ represents the reward for the LLM's input-output pair $(x, y)$.
\end{lemma}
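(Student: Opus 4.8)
The plan is to derive the token-level optimality conditions for the inner problem $J(\pi)=\mathbb{E}_{\mu_\pi}[r]-\beta\KL(\pi\|\pi_{\text{ref}})$ and then telescope them along a full response. First I would rewrite $J(\pi)$ over the token-level MDP as $J(\pi)=\mathbb{E}_{\pi}\big[\sum_{t\ge 0}\gamma^t\big(r(s_t,a_t)-\beta\log\frac{\pi(a_t\mid s_t)}{\pi_{\text{ref}}(a_t\mid s_t)}\big)\big]$, so that the KL regularizer becomes a per-step shaping of the reward. Since transitions are deterministic ($s'=s\oplus a$), the (soft) value and action-value functions of the optimal policy obey $Q^*(s,a)=r(s,a)+\gamma V^*(s')$ together with the soft Bellman backup $V^*(s)=\max_\pi \mathbb{E}_{a\sim\pi}\big[Q^*(s,a)-\beta\log\frac{\pi(a\mid s)}{\pi_{\text{ref}}(a\mid s)}\big]$, and $J(\pi^*)=V^*(s_0)$ by the closed-form result already invoked from \cite{from_r_to_q}.

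Next I would solve this one-step KL-regularized maximization in closed form — this is the step I expect to be the crux. Treating $\pi(\cdot\mid s)$ as a point on the simplex and introducing a Lagrange multiplier for the normalization constraint (equivalently, invoking the Gibbs variational principle / the cited closed-form characterization), the maximizer is the Boltzmann policy $\pi^*(a\mid s)=\pi_{\text{ref}}(a\mid s)\exp\!\big((Q^*(s,a)-V^*(s))/\beta\big)$, with $V^*(s)=\beta\log\sum_{a}\pi_{\text{ref}}(a\mid s)\exp(Q^*(s,a)/\beta)$ chosen so that $\pi^*(\cdot\mid s)$ normalizes. Rearranging yields the per-token identity $\beta\log\frac{\pi^*(a\mid s)}{\pi_{\text{ref}}(a\mid s)}=Q^*(s,a)-V^*(s)=r(s,a)+\gamma V^*(s')-V^*(s)$, i.e. $r(s,a)=\beta\log\frac{\pi^*(a\mid s)}{\pi_{\text{ref}}(a\mid s)}+V^*(s)-\gamma V^*(s')$.

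Finally I would sum this identity over the tokens $a_0,\dots,a_{t-1}$ of the response $y$ along the induced trajectory $s_0,\dots,s_t$ with $s_0$ the state given by prompt $x$ and $s_{i+1}=s_i\oplus a_i$, using the convention $\gamma=1$ standard for the finite-horizon LLM MDP. The log-ratio terms combine via the chain rule of probability, $\sum_i \log\frac{\pi^*(a_i\mid s_i)}{\pi_{\text{ref}}(a_i\mid s_i)}=\log\frac{\pi^*(y\mid x)}{\pi_{\text{ref}}(y\mid x)}$, the value terms telescope to $V^*(s_0)-V^*(s_t)$, and — reading $r(x,y)$ as the cumulative trajectory reward $\sum_i r(s_i,a_i)$ — this gives exactly $r(x,y)=\beta\log\frac{\pi^*(y\mid x)}{\pi_{\text{ref}}(y\mid x)}+V^*(s_0)-V^*(s_t)$. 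The two obstacles to be careful about are (i) justifying the soft Bellman fixed point and the Boltzmann form of $\pi^*$ rigorously, which the Lagrangian argument above (or the imported closed-form result) handles, and (ii) the discount factor: the telescoping is exact only for $\gamma=1$, so for general $\gamma$ the statement should carry discount weights, and I would simply note that the token-level LLM setting takes $\gamma=1$.
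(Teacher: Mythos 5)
Your derivation is correct and is essentially the same argument the paper relies on: the paper does not reprove this lemma but imports it from \cite{from_r_to_q} (restating only the fixed-point solution in its appendix), and your route --- the Gibbs/Boltzmann form of the one-step KL-regularized maximizer, the per-token identity $r(s,a)=\beta\log\frac{\pi^*(a\mid s)}{\pi_{\text{ref}}(a\mid s)}+V^*(s)-V^*(s')$, and telescoping with $\gamma=1$ so that the log-ratios chain into $\log\frac{\pi^*(y\mid x)}{\pi_{\text{ref}}(y\mid x)}$ and the values collapse to $V^*(s_0)-V^*(s_t)$ --- is exactly the proof behind that cited result. The only cosmetic difference is a convention: the paper's appendix fixed-point lemma absorbs $\beta\log\pi_{\text{ref}}$ into $Q^*$ so that $\pi^*=\exp\left((Q^*-V^*)/\beta\right)$, whereas you keep $\pi_{\text{ref}}$ explicit with $Q^*(s,a)=r(s,a)+\gamma V^*(s')$; both conventions yield the same per-token relation and hence the same sequence-level identity.
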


As the reward $r$ in the training objective eq.~\eqref{obj:min_max_target} can be initialized arbitrarily, we can have the following assumption.

\begin{assumption}[Initial Reward Simplification]
\label{assumption:init_r}
Without loss of generality, the initial reward $r$ can be treated as $V_\pi(s_0)$ with $V_\pi(s_t) = 0$ for all $t > 0$. Under this assumption, the initial policy $\pi$ is the optimal policy with respect to the initial reward.
\end{assumption}
Now we can substitute the latter part of eq.~\eqref{obj:min_max_target} with its closed-form solution. The correspondence between divergence measures and their conjugate functions has been established in previous work \cite{fgan}. We also list them in Appendix \ref{app: different_f_divergence_sft}. We select the total variation distance as our divergence measure, for which the corresponding conjugate function is simply the identity function. Additionally, the relationship between the reward and policy $\pi$ satisfies eq.~\eqref{eq: r_and_pi}. This allows us to directly obtain the final objective, which takes the familiar form of SFT:
\begin{align}
\max_\pi \mathbb{E}_{\mu_E}[\beta \underbrace{\log \pi(y|x)}_{\text{MLE}} - \underbrace{\log\pi_{\text{ref}}(y|x)- V_\pi(s_{t})}_{\text{Serve as Constants}}],
\label{eq:sft_target_constant}
\end{align}
where $s_t$ represents the terminal state. The expected return after this state becomes a constant. 
\vspace{-3pt}
\paragraph{Conclusion 1: Commonly used SFT is a special case of finding implicit reward, same as DPO.} We now derive that the commonly used SFT loss constitutes a \emph{special case} of reward discovery through imitation learning when total variation is selected as the $f$-divergence measure. Appendix \ref{app: different_f_divergence_sft} presents alternative training targets derived from different $f$-divergence functions. Since the derivation process above is reversible, we conclude that the \textbf{SFT process searches along the optimal policy-reward subspace, attempting to model the reward implicitly embedded in expert demonstrations}. At the start point of the SFT process, the policy is the optimal policy under Assumption \ref{assumption:init_r}. During the optimization process, the relationship between the model and reward continues to satisfy eq.~\eqref{eq: r_and_pi}, resulting in searching the optimal subspace of policy-reward. This implicit reward structure aligns perfectly with that in DPO, offering a harmonious theoretical view that unifies both approaches.
\vspace{-3pt}
\paragraph{Conclusion 2: KL term absent in commonly used SFT.} The difference in eq.~\eqref{eq:sft_target_constant} between the reference model and policy model takes a zero-order form, which acts as a constant when performing stochastic gradient descent. However, it constrains the update step size of the policy model and plays an important role in most RL algorithms. \textbf{The absence of this term leads to a substantial distance between the post-DPO model's training starting point and the base model}. We propose a simple but effective method to mitigate this limitation. With smaller learning rates to reduce the optimization step size, we show significant performance improvements in the instruction following domain in Section~\ref{exp:small_lr}. Furthermore, by selecting different $f$-divergences, we can derive objectives similar to SFT while preserving the KL term. Most of these involve logarithmic and exponential operations that may lead to numerical instability. We select three representative divergences and present their comparative results in Section~\ref{exp:other_form_sft}.

\subsection{Model Maintains Intrinsic Expected Return Estimation During SFT}
\label{theory:QV}
In the DPO process, \cite{from_r_to_q} noted that the logits of LLMs can be interpreted as a Q-function under mild assumptions. We extend this conclusion based on similar structures in eq~\eqref{obj:min_max_target}.

\begin{theorem}[Intrinsic Expected Return] \label{theorem: intrinsicreturn}
During the SFT process, the logits $l_a$ of a language model correspond to the Q-function $Q(s, a)$ of the learned implicit reward:
\begin{align}
    l_a = Q_{\hat{r}}(s, a) + C(s) = \hat{r}(s, a) + \gamma\mathbb{E}_{s_{t+1}\sim P(\cdot|s,a)}[V_{\hat{r}}(s_{t+1})],
\end{align}
where $\hat{r}$ is the model's implicit reward function satisfying eq.~\eqref{eq: r_and_pi}, $\gamma$ is the discount factor, and $V_{\hat{r}}(s_{t+1})$ is the value function of the next state. $C(s)$ is a function conditioned only on the state.
\end{theorem}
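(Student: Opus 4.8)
The plan is to transplant the change of variables that, in \cite{from_r_to_q}, turns the reference-$\KL$-regularized DPO objective into an equivalent maximum-entropy problem, and to apply it to the inner maximization $J(\pi)=\mathbb{E}_{\mu_\pi}[r]-\beta\KL(\pi\|\pi_{\text{ref}})$ that already appears inside eq.~\eqref{obj:min_max_target}. First I would record the only structural fact about language models that is needed: the policy is a softmax over logits, so $\log\pi(a\mid s)=l_a-\log\sum_{a'}e^{l_{a'}}$, i.e.\ $l_a=\log\pi(a\mid s)+Z(s)$ with $Z(s)=\log\sum_{a'}e^{l_{a'}}$ a function of the state alone. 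Second, I would invoke Conclusion~1 of Section~\ref{SFT_is_special_case} together with Assumption~\ref{assumption:init_r}: throughout SFT the current model $\pi$ stays in the optimal policy--reward subspace, hence it is the maximizer of $J(\cdot)$ under its own implicit reward $\hat r$, so the closed-form characterization of $\arg\max_\pi J(\pi)$ borrowed from \cite{from_r_to_q} is legitimately available at every training step with $r=\hat r$.

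The core of the argument is then a token-level consistency relation. Telescoping the sequence-level identity eq.~\eqref{eq: r_and_pi} across the tokens of a trajectory, using the deterministic transition $s_{t+1}=s_t\oplus a_t$ and the Bellman recursion $Q_{\hat r}(s_t,a_t)=\hat r(s_t,a_t)+\gamma\mathbb{E}_{s_{t+1}\sim P(\cdot\mid s_t,a_t)}[V_{\hat r}(s_{t+1})]$, gives the per-token form $Q_{\hat r}(s_t,a_t)=\beta\log\frac{\pi(a_t\mid s_t)}{\pi_{\text{ref}}(a_t\mid s_t)}+V_{\hat r}(s_t)$ (with $V_{\hat r}=V^*$ since $\pi$ is optimal for $\hat r$). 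Substituting $\log\pi(a_t\mid s_t)=l_{a_t}-Z(s_t)$ and folding the reference term $\beta\log\pi_{\text{ref}}(a_t\mid s_t)$ into the implicit reward --- exactly the substitution that converts $\KL$ regularization into entropy regularization in \cite{from_r_to_q}, and which is consistent with $\hat r$ being defined through the log-ratio in eq.~\eqref{eq: r_and_pi} --- every remaining quantity that depends on $s_t$ only ($Z(s_t)$, $V_{\hat r}(s_t)$, and the log-partition of the effective reward) collapses into a single $C(s_t)$. This yields $l_{a_t}=Q_{\hat r}(s_t,a_t)+C(s_t)$, up to the overall $\beta$ scaling that the statement absorbs; the second equality in the theorem is then nothing more than the same Bellman recursion for $Q_{\hat r}$, written with a (here trivial, since the dynamics are deterministic) expectation over the successor state.

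I expect the main obstacle to be the bookkeeping around the reference policy and the coefficient $\beta$: the delicate point is to make sure the logit, the $Q$-function, and the implicit reward all refer to one and the same \emph{effective} reward, which requires being explicit that $\beta\log\pi_{\text{ref}}(a\mid s)$ is absorbed once and only once, and that this absorption is compatible with the definition of $\hat r$ via eq.~\eqref{eq: r_and_pi}. A secondary subtlety is justifying that the closed-form solution of $J$ may be used at \emph{every} SFT iterate rather than only in the limit --- this is precisely what Conclusion~1 plus Assumption~\ref{assumption:init_r} buy us, so I would state that dependence openly instead of leaving it implicit, and I would flag where the $\gamma<1$ versus $\gamma=1$ distinction (and the corresponding boundary term $V_{\hat r}(s_t)$ at the terminal state) enters the telescoping.
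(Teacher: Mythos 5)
Your proposal is correct and rests on the same core argument as the paper: the softmax parametrization of the LLM plus the fact that the model is the optimal policy for its own implicit reward, so the closed-form expression $\pi(a\mid s)=\exp\bigl((Q(s,a)-V(s))/\beta\bigr)$ can be equated with the softmax over logits, leaving only a state-dependent offset $C(s)$ (and the $\tau/\beta$ scaling that both you and the paper absorb). The paper performs this matching directly from the fixed-point lemma in two lines, obtaining $l_i=\tfrac{\tau}{\beta}Q(s,a_i)+C(s)$, whereas you reach the same per-token identity by telescoping the sequence-level relation eq.~\eqref{eq: r_and_pi}; your additional care in stating that $\beta\log\pi_{\text{ref}}(a\mid s)$ must be folded into the effective reward exactly once addresses a bookkeeping point the paper's proof leaves implicit (its $Q$ silently refers to that effective reward), so your route is slightly longer but, on that point, more explicit.
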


We provide a detailed proof in Appendix \ref{app:SFT_intrinsic_return}. Our findings indicate that not only in the DPO process but also in SFT, the model's logits can be interpreted as a Q-function characterizing the model's estimation of expected returns. These returns are calculated based on \textbf{the implicit reward learned by the model itself}. The function $C(s)$ represents the gap between the true Q value and the logits, but it does not affect the relative ranking among different actions since it depends only on the current state and acts as a constant across actions. The value function can be calculated using log-sum-exp according to Appendix \ref{app:theory_proof_start}, and we hypothesize that:
\begin{assumption}[Value-Dominance Assumption]
\label{assumption:value_dom}
For most two states $s_1$ and $s_2$, the difference between $C(s_1)$ and $C(s_2)$ is smaller than the difference between $V(s_1)$ and $V(s_2)$.
\vspace{-3pt}
\end{assumption}

This conclusion allows us to use the log-sum-exp of logits from the LLM as a Value function, instead of performing Monte Carlo sampling when utilizing other divergence formats. 

\vspace{-3pt}
\section{Empirical Study} \label{sec:exp}
\vspace{-2pt}
In this section, we provide empirical analysis on the instruction-following task. Our detailed discussion is presented as follows.
\vspace{-3pt}
\begin{itemize}  
  \item \textbf{Small learning rate during SFT process can yield significant benefits for post-DPO models.} We prove that reducing the learning rate to decrease the single-step optimization stride for SFT during sequential training improves results, as demonstrated in Section~\ref{exp:small_lr}.
  
  \item \textbf{Alternative $f$-divergences that preserve KL terms 
 also lead to better results.} Training targets derived from other $f$-divergence do not suffer from losing the KL term. We select Pearson $\chi^2$ and Squared Hellinger, which avoid numerical stability issues associated with logarithmic and exponential functions, to demonstrate these improvements in Section \ref{exp:other_form_sft}.
  
  \item \textbf{LLM logits exhibit value function properties, evaluating state quality similarly.} By leveraging the characteristic of value functions to reflect expected state quality, we demonstrate that different models maintain similar judgments across states in Section \ref{subsec:exp_logits_Q}.
  
  \item \textbf{SFT mitigates initial reward randomness and quickly aligns implicit rewards to reasonable values.} We explain the role of SFT in post-training as correcting the initial reward Assumption \ref{assumption:init_r}. We demonstrate that $V(s_0)$ converges rapidly during the SFT process and present the corresponding empirical results in Section~\ref{exp:sft_v_0}.
\end{itemize}

\subsection{Basic Experiment Setting}
\paragraph{Model and Dataset Selection} General instruction following is a fundamental capability of large language models required for most downstream tasks. Following the setting of SimPO \cite{SimPO}, we select Llama3-8B \cite{Llama3_tech_report} and Mistral-7B \cite{Mistral_tech_report} as our base models. UltraChat-200K \cite{ultrachat} is a commonly used SFT dataset. For general instruction-following tasks, models typically complete SFT training on UltraChat-200K before performing DPO on Ultra-feedback \cite{ultrafeedback} to obtain the final model. We use these two datasets in our experiments.
\vspace{-3pt}
\paragraph{Hyperparameters, Device, Baselines, and Evaluation Benchmarks} The most commonly used learning rates during post-training are 2e-5 for SFT and 5e-7 for DPO. For all of our experiments, we train with a batch size of 128 on 8$\times$H100 GPUs using the OpenRLHF \cite{hu2024openrlhf} framework. For the DPO training process, $\beta$ is an important hyperparameter, and we set $\beta = 0.01$. We evaluate our models using AlpacaEval2 \cite{alpaca_eval2}, Arena Hard \cite{arena_hard}, and MT-bench \cite{mt_bench}. Since these benchmarks can be influenced by many implementation details, such as the vLLM \cite{vLLM} version, we maintain consistent implementation versions with SimPO. We use the same decoding parameters as SimPO during downstream evaluation. As evaluating all three benchmarks simultaneously would incur significant API costs, in some experiments, we used AlpacaEval2 as the representative benchmark.

\subsection{Small Learning Rate SFT Leads to Better post-DPO Results}
\label{exp:small_lr}
As mentioned in Section \ref{SFT_is_special_case}, the KL term, i.e., $\log \pi_{\text{ref}}$, that constrains the SFT learning process is a zero-order term and provides no gradient contribution to policy optimization after differentiation. Considering the importance of step size constraints in traditional RL, we infer that the learning rate for SFT should be reduced to decrease the effective update magnitude. Compared with the commonly used $2 \times 10^{-5}$, we implement smaller learning rates of $5 \times 10^{-6}$ for Llama3-Base and $1 \times 10^{-6}$ for Mistral during the SFT process. For the RL process following SFT, we select DPO and SimPO algorithms while maintaining the same hyperparameters as in the $2 \times 10^{-5}$ configuration. We utilize the publicly released checkpoints from the original SimPO implementation and evaluate them in the same testing environment to establish our baseline results.

\paragraph{Main Results}
Results are presented in Table \ref{tab:sft_dpo_main_res}. Our reproduced baseline results outperform the results reported in the original SimPO paper. We maintain identical settings to ensure fair comparison. It can be observed that reducing the learning rate leads to moderate improvements for SFT checkpoints and significant enhancements after applying alignment algorithms. The SimPO results show relative improvements of \textbf{20\%} (absolute improvement of 5\%) for Llama3-8B and \textbf{25\%} (absolute improvement of 6\%) for Mistral after applying DPO. As we maintain identical hyperparameters in the DPO training process, the performance improvements primarily derive from adjusting the learning rate during the SFT phase, which confirms our hypothesis.

\setlength{\tabcolsep}{2pt}
\begin{table}[t]
\centering
\caption{
Downstream results for the smaller learning rate setting. The reference-only result is reported by SimPO\cite{SimPO}. We reproduce the SFT, DPO, and SimPO results using publicly available checkpoints. The models trained with a smaller learning rate for SFT and subsequent models fine-tuned from this SFT checkpoint are marked in \colorbox{airforceblue}{blue}.
}
\renewcommand{\arraystretch}{1.1}
\resizebox{\textwidth}{!}{
\begin{tabular}{lcccccccccc}
\toprule[1.5pt]
\multirow{3}{*}{\textbf{Method}} & \multicolumn{5}{c}{\textbf{Llama-3-Base (8B)}} & \multicolumn{5}{c}{\textbf{Mistral-Base (7B)}} \\ 
\cmidrule(lr){2-6}\cmidrule(lr){7-11}
& \multicolumn{2}{c}{\textbf{AlpacaEval 2}} & \multicolumn{1}{c}{\textbf{Arena-Hard}} & \multicolumn{2}{c}{\textbf{MT-Bench}} & \multicolumn{2}{c}{\textbf{AlpacaEval 2}} & \multicolumn{1}{c}{\textbf{Arena-Hard}} & \multicolumn{2}{c}{\textbf{MT-Bench}} \\
\cmidrule(lr){2-3}\cmidrule(lr){4-4} \cmidrule(lr){5-6} \cmidrule(lr){7-8}\cmidrule(lr){9-9}\cmidrule(lr){10-11} 
& {\scriptsize \bf LC (\%)} & {\scriptsize \bf WR (\%)} & {\scriptsize \bf WR (\%)} & {\scriptsize \bf GPT-4 Turbo} & {\scriptsize \bf GPT-4} & {\scriptsize \bf LC (\%)}  & {\scriptsize \bf WR (\%)} & {\scriptsize \bf WR (\%)} & {\scriptsize \bf GPT-4 Turbo} & {\scriptsize \bf GPT-4} \\
\midrule[1pt]
\multicolumn{11}{c}{\textit{Reference Only. Not Compared Directly}} \\
RRHF\cite{RRHF} &11.6 &10.2 &5.8 &5.4 &6.7 & 12.1 &10.1 &6.3 &5.8 &7.0 \\
SLiC-HF\cite{SLic-HF} &10.9 &8.9 &7.3 &5.8 &7.4 & 12.3 &13.7 &6.0 &6.3 &7.6 \\
CPO\cite{CPO} &9.8 &8.9 &6.9 &5.4 &6.8 & 10.8 &8.1 &5.8 &6.0 &7.4 \\
IPO\cite{IPO} &11.8 &9.4 &7.5 &5.5 &7.2 & 14.4 &14.2 & 17.8 &6.5 &7.4 \\
KTO\cite{KTO} &13.1 &9.1 &5.6 &5.4 &7.0 & 14.2 &12.4 &12.5 &6.3 & 7.8 \\
ORPO\cite{ORPO} &14.7 &12.2 &7.0 &5.8 &7.3 & 12.2 &10.6 &10.8 &6.1 &7.6 \\
R-DPO\cite{R_DPO} &17.4 & 12.8 & 8.0 & 5.9 & 7.4 & 17.6 & 14.4 & 17.2 & 6.6 &7.5 \\
\midrule[1pt]
\multicolumn{11}{c}{\textit{Learning Rate Optimization For Supervised Fine-tuning}} \\
SFT & 5.8 & 3.7 & 2.7 & 5.9 & 6.8 & 5.7 & 3.6 & 1.6 & 5.4 & 6.1 \\
\text{\quad+ DPO\cite{DPO}}  & 17.3 & 14.2 & 19.7 & 6.8 & 7.3 & 15.6 & 12.5 & 11.7 & 6.3 & 6.4 \\
\text{\quad+ SimPO\cite{SimPO}} & 23.5 & 21.3 & 30.3 & 7.0 & 7.3 & 24.1 & 22.9 & 22.6 & 6.5 & 6.8 \\
\rowcolor{airforceblue}
SFT (smaller lr) & $6.3_{\textcolor{red}{+0.5}}$ & $4.3_{\textcolor{red}{+0.6}}$ & $3.3_{\textcolor{red}{+0.6}}$ & 5.9 & 6.3 & $6.7_{\textcolor{red}{+1.0}}$ & $3.7_{\textcolor{red}{+0.1}}$ & $3.1_{\textcolor{red}{+1.5}}$ & $5.7_{\textcolor{red}{+0.3}}$ & $6.4_{\textcolor{red}{+0.3}}$ \\
\rowcolor{airforceblue}
\text{\quad+ DPO\cite{DPO}}  & $19.4_{\textcolor{red}{+2.1}}$ & $16.2_{\textcolor{red}{+2.0}}$ & $21.1_{\textcolor{red}{+1.4}}$ & $7.0_{\textcolor{red}{+0.2}}$ & $7.4_{\textcolor{red}{+0.1}}$ & $21.5_{\textcolor{red}{+5.9}}$ & $16.7_{\textcolor{red}{+4.2}}$ & $21.6_{\textcolor{red}{+9.9}}$ & $6.5_{\textcolor{red}{+0.2}}$ & $7.0_{\textcolor{red}{+0.6}}$ \\
\rowcolor{airforceblue}
\text{\quad+ SimPO\cite{SimPO}} & $28.5_{\textcolor{red}{+5.0}}$ & $25.0_{\textcolor{red}{+3.7}}$ & $34.3_{\textcolor{red}{+4.0}}$ & 6.6 & 7.3 & $27.3_{\textcolor{red}{+3.2}}$ & $24.0_{\textcolor{red}{+1.1}}$ & 14.9 & 6.3 & 6.8 \\ 
\bottomrule[1pt]
\end{tabular}
}
\label{tab:sft_dpo_main_res}
\vspace{-1.5em}
\end{table}

\setlength{\tabcolsep}{6pt}

\subsection{Other Forms of Imitation Loss Behave Better in Sequential Training}
\label{exp:other_form_sft}
Beyond the Total Variance divergence, other $f$-divergence functions shown in Appendix \ref{app: different_f_divergence_sft} yield training targets where the KL term is not limited to zero-order approximations. However, many alternatives involve logarithmic or exponential calculations, or even composite log-exp operations (as in Jensen-Shannon divergence), which can lead to numerical instability. We select two additional $f$-divergence formulations, which are Pearson $\chi^2$ and Squared Hellinger. Their derived training targets are presented in Table \ref{tab:f-divergence-loss}. 

For Pearson $\chi^2$, there exists a squared probability difference term that acts as a KL constraint. For Squared Hellinger, a coefficient term related to probability differences is multiplied before the classic gradient term after applying the chain rule, which modulates the update step size. We compare these results with our previously obtained Total Variance results.
\setlength{\tabcolsep}{2pt}
\begin{table}[h!]
\centering
\vspace{-5pt}
\caption{\small We list three optional $f$-divergences and their corresponding training targets. We use $\Delta V$ as an abbreviation for $V_\pi(s_0) - V_\pi(s_t)$ for simplicity of notation. See Appendix \ref{app: different_f_divergence_sft} for the complete table.}
\renewcommand{\arraystretch}{1.1}
\resizebox{\textwidth}{!}{
\begin{tabular}{lccc}
\toprule[1.5pt]
\textbf{Name} & \textbf{$D_f(P\|Q)$} & \textbf{Conjugate $f^*(t)$} & \textbf{Training Target $\max_\pi \mathbb{E}_{\mu_E}[\cdot]$} \\
\midrule[1pt]
Total variation
& $\frac{1}{2} \int |p(x)-q(x)|\,\textrm{d}x$
& $t$
& $\beta \log \pi(y|x) - \log\pi_{\text{ref}}(y|x)- V_\pi(s_{t})$
\\
Pearson $\chi^2$
& $\int \frac{(q(x)-p(x))^2}{p(x)}\,\textrm{d}x$
& $\frac{1}{4} t^2 + t$
& $\begin{array}{c}
-\frac{1}{4} (\log\frac{\pi(y|x)}{\pi_{ref}(y|x)} + \Delta V) ^2 
+ \log\frac{\pi(y|x)}{\pi_{ref}(y|x)} + \Delta V
\end{array}$
\\
Squared Hellinger
& $\int\left(\sqrt{p(x)} - \sqrt{q(x)}\right)^2 \,\textrm{d}x$
& $\frac{t}{1-t}$
& $1 - \frac{1}{1 + \log\frac{\pi(y|x)}{\pi_{ref}(y|x)} + \Delta V}$
\\
\bottomrule[1pt]
\end{tabular}
}
\label{tab:f-divergence-loss}
\end{table}
\vspace{-1.2em}
\setlength{\tabcolsep}{6pt}
\setlength{\tabcolsep}{2pt}
\begin{table}[h!]
\centering
\caption{
Downstream results for different training targets and their corresponding post-DPO checkpoints. SFT refers to the commonly used training target derived from total variation. Pearson-SFT refers to the imitation objective derived from Pearson-$\chi^2$ divergence. SH-SFT refers to the objective derived from Squared Hellinger divergence.
}
\renewcommand{\arraystretch}{1.1}
\resizebox{\textwidth}{!}{
\begin{tabular}{lcccccccccc}
\toprule[1.5pt]
\multirow{3}{*}{\textbf{Method}} & \multicolumn{5}{c}{\textbf{Llama-3-Base (8B)}} & \multicolumn{5}{c}{\textbf{Mistral-Base (7B)}} \\ 
\cmidrule(lr){2-6}\cmidrule(lr){7-11}
& \multicolumn{2}{c}{\textbf{AlpacaEval 2}} & \multicolumn{1}{c}{\textbf{Arena-Hard}} & \multicolumn{2}{c}{\textbf{MT-Bench}} & \multicolumn{2}{c}{\textbf{AlpacaEval 2}} & \multicolumn{1}{c}{\textbf{Arena-Hard}} & \multicolumn{2}{c}{\textbf{MT-Bench}} \\
\cmidrule(lr){2-3}\cmidrule(lr){4-4} \cmidrule(lr){5-6} \cmidrule(lr){7-8}\cmidrule(lr){9-9}\cmidrule(lr){10-11} 
& {\scriptsize \bf LC (\%)} & {\scriptsize \bf WR (\%)} & {\scriptsize \bf WR (\%)} & {\scriptsize \bf GPT-4 Turbo} & {\scriptsize \bf GPT-4} & {\scriptsize \bf LC (\%)}  & {\scriptsize \bf WR (\%)} & {\scriptsize \bf WR (\%)} & {\scriptsize \bf GPT-4 Turbo} & {\scriptsize \bf GPT-4} \\
\midrule[1pt]
\rowcolor{mygrey}
\multicolumn{11}{c}{\textit{Traditional SFT}} \\
SFT & 6.3 & 4.3 & 3.3 & 5.9 & 6.3 & 6.7 & 3.7 & 3.1 & 5.7 & 6.4 \\
\text{\quad+ DPO}  & 19.4 & 16.2 & 21.1 & 7.0 & 7.4 & 21.5 & 16.7 & 21.6 & 6.5 & 7.0 \\
\rowcolor{mygrey}
\multicolumn{11}{c}{\textit{Other Format of SFT}} \\
Pearson-SFT & 5.1 & 3.7 & 3.3 & 5.8 & 6.1 & 6.2 & 3.4 & 3.5 & 6 & 6.5 \\
\text{\quad+ DPO}  & $20.1_{\textcolor{red}{+0.7}}$ & $17.7_{\textcolor{red}{+1.5}}$ & $24.7_{\textcolor{red}{+3.6}}$ & $7.1_{\textcolor{red}{+0.1}}$ & 7.2 & $23.1_{\textcolor{red}{+1.6}}$ & $19.0_{\textcolor{red}{+2.3}}$ & $21.8_{\textcolor{red}{+0.2}}$ & $6.8_{\textcolor{red}{+0.3}}$ & 6.7 \\
SH-SFT & 4.8 & 3.7 & 17.3 & 5.7 & 6 & 6.5 & 3.5 & 3.1 & 5.7 & 6.1 \\
\text{\quad+ DPO}  & $19.6_{\textcolor{red}{+0.2}}$ & $17.3_{\textcolor{red}{+1.1}}$ & 19.9 & 6.9 & 7.2 & $23.6_{\textcolor{red}{+2.1}}$ & $20.9_{\textcolor{red}{+4.2}}$ & $22_{\textcolor{red}{+0.4}}$ & $6.8_{\textcolor{red}{+0.3}}$ & 6 \\
\bottomrule[1pt]
\end{tabular}
}
\label{tab:f_divergence_result}
\vspace{-1.5em}
\end{table}

\setlength{\tabcolsep}{6pt}

\paragraph{Main result} The results are presented in Table \ref{tab:f_divergence_result}. It can be observed that both the Pearson $\chi^2$ and Squared Hellinger lead to weaker SFT but better results after DPO, regardless of whether we use Mistral or Llama. We can reach an interesting conclusion that a better SFT checkpoint doesn't necessarily lead to better DPO results. These improvements from KL-regularized SFT validate our theory, showing the importance of the KL term during post-training. The training loss curves for these three SFT approaches are shown in Figure \ref{sub:f_sft_gpt_loss}.

\subsection{Value and Reward in SFT}\label{subsec:exp_logits_Q}

\begin{figure}[t]
    \centering
    \vspace{0.35cm}
    \begin{minipage}[b]{0.95\linewidth} 
        \begin{subfigure}[b]{0.31\linewidth} 
            \includegraphics[width=\linewidth]{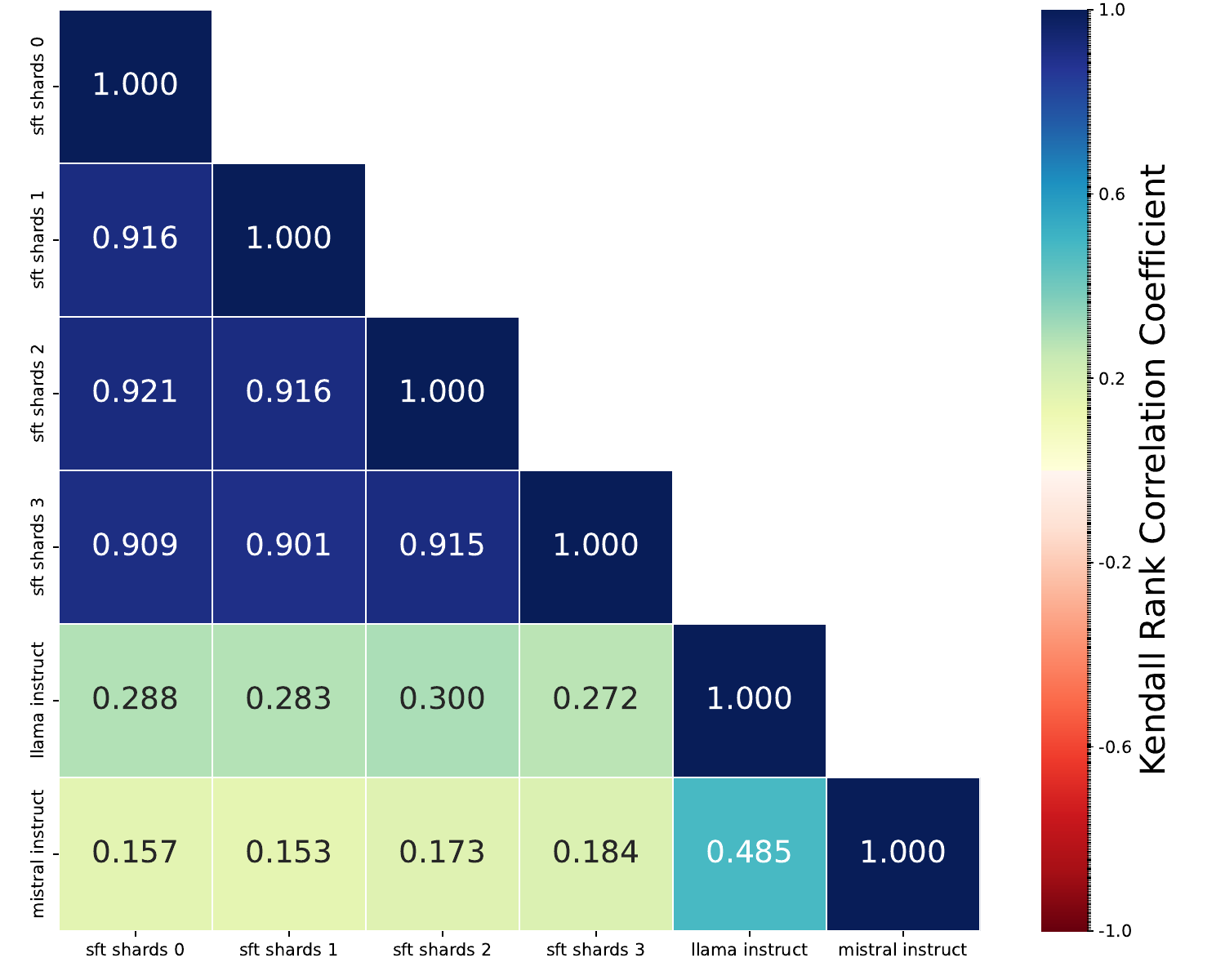}
            \caption{KLCC between different checkpoints}
            \label{sub:kendall_relation}
        \end{subfigure}
        \hfill
        \begin{subfigure}[b]{0.31\linewidth}
            \includegraphics[width=\linewidth]{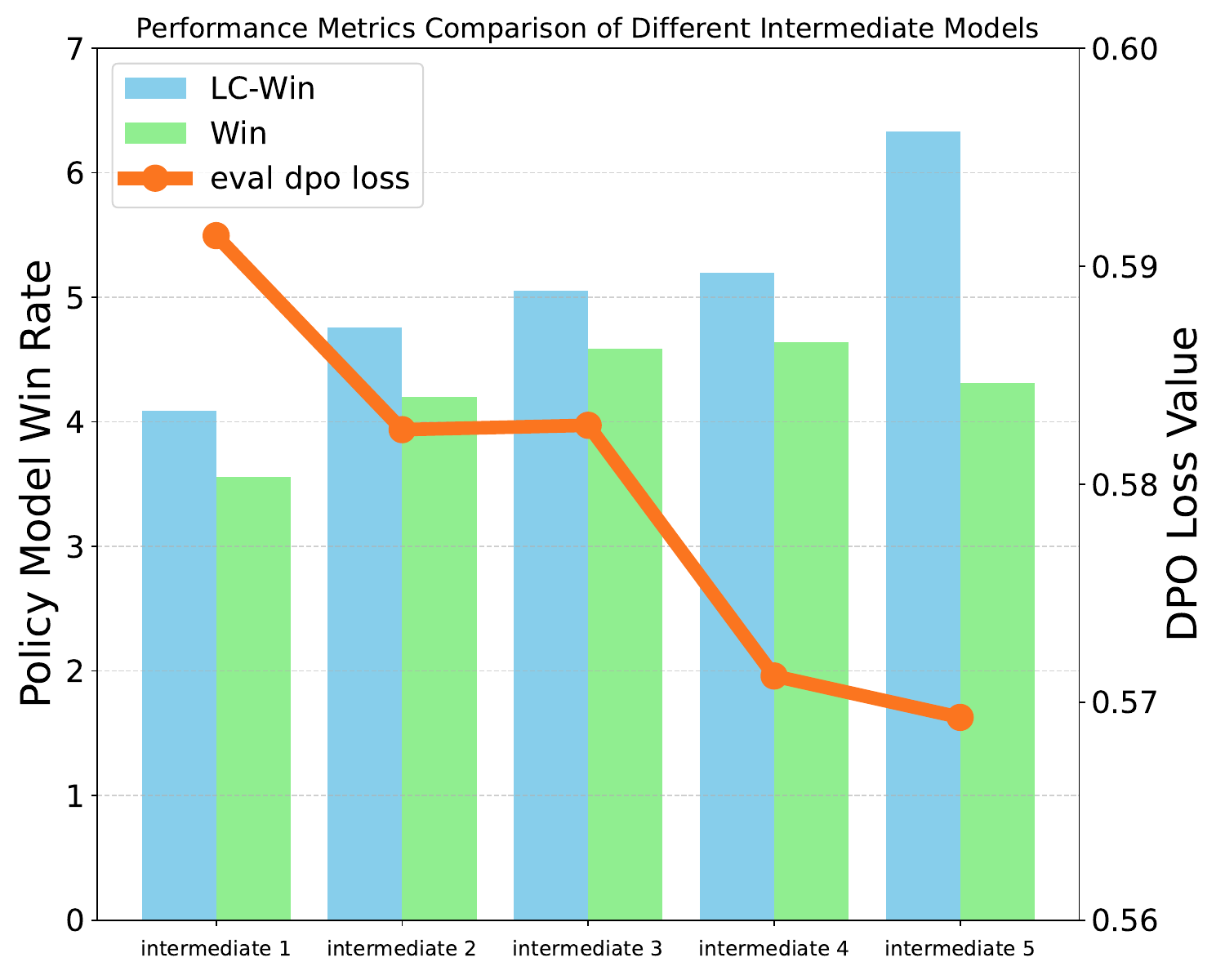}
            \caption{Alpaca-Eval 2 results and corresponding DPO Loss.} 
            \label{sub:dpo_loss_policy_win_v1}
        \end{subfigure}
        \hfill
        \begin{subfigure}[b]{0.35\linewidth}
            \includegraphics[width=\linewidth]{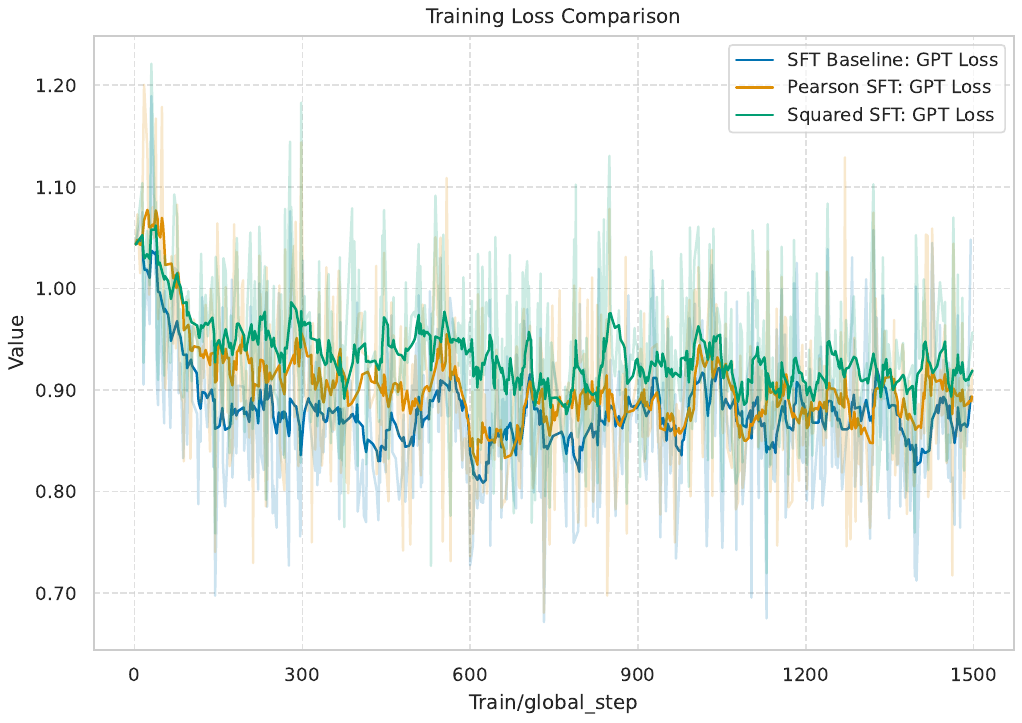}
            \caption{Training loss for different objectives.}
            \label{sub:f_sft_gpt_loss}
        \end{subfigure}
        \caption{\textbf{Left}: We obtain rankings of different models on identical step-wise instruction-response pairs and calculate KLCC to measure agreement between these rankings. \textbf{Middle}: Lower values of DPO Loss indicate better alignment between implicit reward and environment reward. And the corresponding AlpacaEval2 result. \textbf{Right}: Training loss for objectives derived from: total variation, Pearson $\chi^2$, and squared Hellinger.}
    \end{minipage}
    \vspace{-1em}
\end{figure}

With the theoretical conclusions presented in Section \ref{theory:QV}, we estimate the value function using the logits of LLMs. Traditional value estimation typically involves Monte Carlo sampling. Moreover, values are conventionally calculated using ground truth rewards, which in our context are implicit and not directly accessible. We aim to provide empirical evidence that LLM logits exhibit properties of value functions: their scores can be used to evaluate state quality.

More precisely, we demonstrate that for LLMs trained on the same domain, the evaluations for different states maintain similar rankings across models. We divide the UltraChat-200k dataset into 4 splits and perform SFT on Llama-3-base to obtain 4 different checkpoints. We also select Llama-3-instruct as a representative model that shares the same prior but was trained on different datasets, and Zephyr \cite{zephyr} as a model with a different prior but trained on similar datasets. For clearly defined steps, we choose MATH-500 \cite{math500} as the validation set, sample one trajectory for each question, and split the reasoning path into steps. We extract the logits from the model's output after inputting the final token of each step, calculate the log-sum-exp, and rank these values within each individual model. If the logits of an LLM possess the property of evaluating state quality, they should share similar ranks across models. We calculate the Kendall rank correlation coefficient (KLCC).

\paragraph{Main Results} The results are shown in Fig.~\ref{sub:kendall_relation}. We find that the value rankings of LLMs are positively correlated across all experimental settings. The correlation approaches 1 for the four dataset shards and remains positive even when the post-training dataset or model prior changes. An interesting observation is that the ranking correlation between Zephyr and Llama3-instruct is significantly higher than that between Zephyr and our sharded models, despite Zephyr being trained on UltraChat rather than the same dataset. The positive correlation indirectly validates that the logits function as a value, thus confirming our Assumption~\ref{assumption:value_dom}.

\paragraph{Another observation for implicit reward} We find that stronger alignment between the model's implicit reward and downstream reward correlates with better model performance. During the SFT training process, we calculate the DPO loss on previously annotated pairwise data of AlpacaEval 2 used as an evaluation set. As the benchmark serves as an environment, lower DPO loss indicates greater consistency between implicit rewards and the environment. The results are shown in Figure \ref{sub:dpo_loss_policy_win_v1}. We observe that alignment between rewards and the environment positively correlates with model performance, which aligns with intuitive expectations.

\begin{wrapfigure}[14]{r}{0.35\linewidth}
    \centering
    \vspace{-5pt}
    \includegraphics[width=\linewidth]{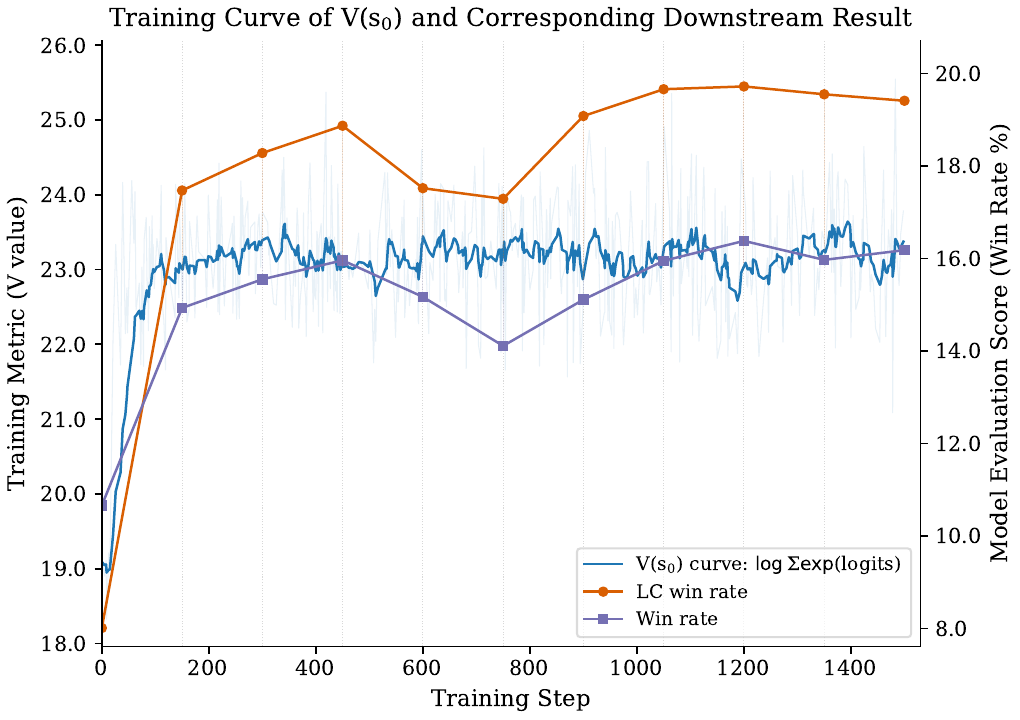}
    
    \caption{The trend of $V(s_0)$ using the logits of the first response token. Performance on AlpacaEval 2 for post-DPO on SFT checkpoints at corresponding training steps.}
    \label{sub:inference_math}
\end{wrapfigure}

\subsection{Reward Stabilization through SFT}
\label{exp:sft_v_0}
SFT in our theory is attempting to bring the implicit reward into an appropriate range for fine-grained modifications. Assumption~\ref{assumption:init_r} assumes that at the starting point of the implicit reward search process, the model represents the optimal policy under some unknown reward function. This reward function may significantly differ from the reward function in real downstream tasks. We plot the training curve of the log-sum-exp of the first logits after the prompt and create some early-exit SFT checkpoints to perform DPO on them, as shown in Figure~\ref{sub:inference_math}. It can be observed that the $V(s_0) = \log \Sigma(\cdot)$ increases rapidly and converges quickly. The downstream task performance exhibits highly consistent trends. We conclude that SFT has already completed its task of bringing the implicit reward to a reasonable range by 150 steps, and subsequent steps focus on more refined modeling.

\section{Broader Impacts, Limitations and Future Work} \label{sec:limit}
\paragraph{Broader Impacts: Philosophical Dimension} As we mentioned, models learn implicit rewards during SFT, which may lead to further discussions about whether LLMs can be considered entities with preset environmental awareness. This opens philosophical inquiries into the nature of consciousness and the extent to which artificial systems might exhibit consciousness-like properties.
\vspace{-3pt}
\paragraph{Limitation: We did not experimentally explore additional divergence functions} The commonly used KL divergence and JS divergence involve logarithmic and exponential calculations, which can lead to numerical instability. We have attempted various implementations and performed small-value clipping on the data. Although this prevented NaN errors, the loss would still reach extremely large values, such as 6e7. We did not design specialized operators to implement these methods, resulting in unknown effectiveness for these KL divergence approaches.
\vspace{-3pt}
\paragraph{Future Work: SFT and DPO multi-object learning} Since both the SFT process and DPO process model the implicit reward, a natural idea is to formulate them into multi-objective learning rather than sequential training. Appendix \ref{exp:DPO_SFT_Multiobject} details some failed attempts at implementing this multi-objective approach, hoping these findings can contribute to future research in this direction.
\section{Conclusion}
In conclusion, we establish implicit reward learning as a unifying view connecting SFT and preference learning in LLM post-training. We demonstrate that conventional SFT is a special case of implicit reward learning using total variation divergence, limited by an absent KL term. Our approach of reducing learning rates significantly improves model performance, while alternative $f$-divergence objectives preserving the KL term show additional gains. We extend DPO's logits-to-Q-function mapping to SFT and confirm SFT's crucial role in stabilizing random implicit rewards, advancing both theoretical understanding and practical strategies for more effective post-training.







\bibliographystyle{unsrt}
\bibliography{main.bib}

\newpage
\appendix
\section{Related Work} \label{sec:related}
\vspace{-2pt}
\label{app:related_work}
\subsection{Inverse Reinforcement Learning and $f$-Divergence}
First formalized by \cite{Andrew_IRL}, Inverse Reinforcement Learning assumes that the expert represents the optimal policy under a certain reward function. It tries to recover this reward function and train a policy model that maximizes it. This approach was significantly advanced by \cite{MaxEnt_IRL} with the principle of maximum entropy to ensure proper exploration. Some commonly used IRL methods include game-theoretic approaches, most notably GAIL \cite{GAIL}. It trains in an adversarial manner using explicit reward recovery and learns policies from the recovered reward function. \cite{fgan} mathematically provides a unified view using $f$-divergence. \cite{D_min_IL} also leverages different forms of $f$-divergence derived for imitation learning. It involves iterative optimization and primarily explores applications in classical RL scenarios.

Learning an explicit reward function often involves additional parameters and high-variance adversarial optimization. \cite{IQ_learn} avoids explicit reward learning by learning a Q-function parameterized by both reward and policy. \cite{Imitation_learning_via_scaleable_RL} followed this work and introduced non-adversarial imitation learning for large language models. \cite{GEM} formulates imitation learning using reverse KL divergence with an entropy regularizer and avoids explicitly training a reward function, which helps reduce overfitting. \cite{SequenceMatch} involves different formulations of $f$-divergence in solving the out-of-distribution (OOD) problem in the generation process and enables backtracking in sequence generation.

We modify the original training objective of maximum-entropy distribution matching to make it more suitable for the post-training process. This allows us to apply closed-form solutions and avoid inner-loop optimization instead of using the Bellman equation like \cite{IQ_learn, Imitation_learning_via_scaleable_RL}. In comparison with \cite{GEM}, our work utilizes a more general $f$-divergence formulation \cite{fgan} and does not introduce negative samples.

\section{Theoretical Proofs}
In this section, we provide detailed proofs of our theoretical results.
\label{app:theory_proof_start}
\begin{lemma}[Fixed-Point Solution for Maximum-Entropy RL \cite{from_r_to_q}]
The optimal policy $\pi^*(a_t \mid s_t)$ and the corresponding optimal value function $V^*(s_t)$ in the maximum-entropy framework satisfy the following fixed-point equations:
\begin{align}
\pi^*(a_t \mid s_t) &= \exp\left(\frac{Q^*(s_t, a_t) - V^*(s_t)}{\beta}\right), \label{eq: fixpointprob} \\
V^*(s_t) &= \beta \log \int_{\mathcal{A}} \exp\left(\frac{Q^*(s_t, a_t)}{\beta}\right) \, da_t, \label{eq: fixpointvalue}\\
J(\pi^*) &= V^*(s_0),
\end{align}
where $\pi^*(a \mid s)$ is the optimal policy. $Q^*$ and $V^*$ are the quality function and value function of the optimal policy.
\end{lemma}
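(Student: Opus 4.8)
The plan is to prove these as the standard soft (maximum-entropy) dynamic-programming identities, specialized to the token-level MDP of Section~\ref{sec:Methods}, where transitions are deterministic and every trajectory reaches a terminal state in finitely many steps; this lets me argue by backward induction and avoid any infinite-horizon contraction machinery. Concretely, I would define the soft state--action value through the soft Bellman recursion $Q^*(s_t,a_t)=r(s_t,a_t)+\gamma\,\mathbb{E}_{s_{t+1}\sim P(\cdot\mid s_t,a_t)}\!\left[V^*(s_{t+1})\right]$, with $V^*\equiv 0$ on terminal states, and define $V^*(s_t)=\max_{p\in\Delta(\mathcal{A})}\big(\mathbb{E}_{a\sim p}[Q^*(s_t,a)]+\beta\,\mathcal{H}(p)\big)$, the one-step soft backup (with $\mathcal{H}$ replaced by $-\KL(\,\cdot\,\|\,\pi_{\text{ref}})$ if a reference policy is kept, which only swaps the base measure in what follows). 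The statement then reduces to: (i) solving the one-step maximization in closed form, which yields \eqref{eq: fixpointprob} and \eqref{eq: fixpointvalue} directly; and (ii) verifying that the policy so obtained is the global maximizer of $J$ and that its optimized value is $V^*(s_0)$.

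For step (i) I would invoke the Gibbs variational principle. The functional $p\mapsto\mathbb{E}_{a\sim p}[Q^*(s_t,a)]-\beta\!\int p(a)\log p(a)\,da$ is strictly concave on the simplex, so it has a unique maximizer characterized by stationarity of the Lagrangian with a single multiplier $\lambda$ enforcing $\int p(a)\,da=1$: $Q^*(s_t,a)-\beta\log p(a)-\beta-\lambda=0$, hence $p(a)\propto\exp\!\big(Q^*(s_t,a)/\beta\big)$. Normalizing and setting $Z(s_t)=\int_{\mathcal{A}}\exp\!\big(Q^*(s_t,a)/\beta\big)\,da$ gives $\pi^*(a\mid s_t)=\exp\!\big(Q^*(s_t,a)/\beta\big)/Z(s_t)$. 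Substituting this maximizer back into the backup and using $\log\pi^*(a\mid s_t)=Q^*(s_t,a)/\beta-\log Z(s_t)$, the terms $\mathbb{E}[Q^*]$ and $-\beta\,\mathbb{E}[\log\pi^*]$ collapse to the constant $\beta\log Z(s_t)$, so $V^*(s_t)=\beta\log Z(s_t)=\beta\log\int_{\mathcal{A}}\exp\!\big(Q^*(s_t,a)/\beta\big)\,da$ --- this is \eqref{eq: fixpointvalue}. Eliminating $Z(s_t)=\exp(V^*(s_t)/\beta)$ from the expression for $\pi^*$ then gives $\pi^*(a\mid s_t)=\exp\!\big((Q^*(s_t,a)-V^*(s_t))/\beta\big)$, which is \eqref{eq: fixpointprob}.

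For the remaining identity $J(\pi^*)=V^*(s_0)$ I would unroll $J$. Writing $\mathbb{E}_{\mu_\pi}[r]-\beta\,\KL(\pi\|\pi_{\text{ref}})$ as a discounted sum along trajectories starting at $s_0$ of the soft per-step rewards $r(s_t,a_t)+\beta\,\mathcal{H}(\pi(\cdot\mid s_t))$ exhibits it as the telescoping expansion of the soft Bellman recursion, so $J(\pi)=V^{\pi}(s_0)$ for the soft value $V^{\pi}$ of $\pi$. The policy constructed in step (i) is the Boltzmann policy associated with $Q^*$, so $V^{\pi^*}$ satisfies the same recursion as $V^*$ and coincides with it by backward induction, giving $J(\pi^*)=V^*(s_0)$. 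Optimality follows from a soft performance-difference identity: for any $\pi$, a short computation yields $V^*(s)-V^{\pi}(s)=\beta\,\KL\!\big(\pi(\cdot\mid s)\,\|\,\pi^*(\cdot\mid s)\big)+\gamma\,\mathbb{E}_{a\sim\pi,\,s'\sim P}\!\left[V^*(s')-V^{\pi}(s')\right]$, and iterating this backward from terminal states (where both sides vanish) gives $V^{\pi}(s_0)\le V^*(s_0)$, with equality iff $\pi=\pi^*$ on all reachable states.

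The main obstacle is not any individual computation --- the Gibbs step is routine and the performance-difference identity is a one-liner --- but making the reduction to backward induction rigorous: one must confirm that the soft backup genuinely computes $\sup_\pi J(\pi)$ (rather than merely a self-consistent fixed point), that this supremum is attained, and that the discounted telescoping sums converge, which requires some care with the discount factor and the $(1-\gamma)$ normalization built into the occupancy measure. Once that bookkeeping is pinned down, the three identities fall out of steps (i)--(ii) as above.
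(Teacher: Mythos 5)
Your derivation is correct, but note that the paper itself contains no proof of this lemma: it is imported verbatim from \cite{from_r_to_q} and used as a black box, so there is no internal argument to compare against. What you have written is the standard soft-RL derivation that the cited work relies on, and your two main ingredients are sound: the Gibbs variational step correctly yields $\pi^*(a\mid s)\propto\exp\bigl(Q^*(s,a)/\beta\bigr)$ and $V^*(s)=\beta\log Z(s)$, and your soft performance-difference identity $V^*(s)-V^{\pi}(s)=\beta\,\KL\bigl(\pi(\cdot\mid s)\,\|\,\pi^*(\cdot\mid s)\bigr)+\gamma\,\mathbb{E}\bigl[V^*(s')-V^{\pi}(s')\bigr]$ checks out and already resolves the "obstacle" you flag at the end, since backward induction from terminal states shows the soft backup value is the attained supremum of $J$ (the token-level MDP is finite-horizon, so no contraction argument is needed). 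Two pieces of bookkeeping deserve to be made explicit rather than waved at. First, with the $\KL(\pi\|\pi_{\text{ref}})$ regularizer the fixed-point equations come out with $\pi_{\text{ref}}$ as base measure, i.e.\ $\pi^*(a\mid s)=\pi_{\text{ref}}(a\mid s)\exp\bigl((Q^*(s,a)-V^*(s))/\beta\bigr)$ and $V^*(s)=\beta\log\int\pi_{\text{ref}}(a\mid s)\exp\bigl(Q^*(s,a)/\beta\bigr)\,da$; to recover the equations exactly as stated in the lemma (with no $\pi_{\text{ref}}$ visible) you must absorb $\beta\log\pi_{\text{ref}}(a\mid s)$ into the definition of $Q^*$, which is precisely the convention of \cite{from_r_to_q} --- your remark that the reference policy "only swaps the base measure" is the right observation but should be turned into that explicit redefinition. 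Second, the identity $J(\pi^*)=V^*(s_0)$ requires reconciling the $(1-\gamma)$ normalization in the occupancy-measure definition of $\mathbb{E}_{\mu_\pi}[r]$ with the undiscounted trajectory sum; in the LLM setting the paper implicitly takes $\gamma=1$ with finite episodes, under which your telescoping argument is exactly right, but the discrepancy should be stated rather than left as "care with the discount factor."
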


\subsection{Non-adversarial Imitation: Finding Reward} \label{app:object_eq}

\begin{theorem}[Equivalent Objective for Distribution Matching]
Learning a policy that minimizes the $f$-divergence between expert and policy state-action distributions is equivalent to \textbf{first learning an optimal policy under an arbitrary reward function, then optimizing a function of that reward function}:
\begin{align}
    &\min_\pi \ D_f(\mu_\pi \| \mu_E) + \beta \KL (\pi \| \pi_{\text{ref}}), \\ 
    = -&\min_r \mathbb{E}_{\mu_E}[f^*(-r)] + \underbrace{\max_\pi \mathbb{E}_{\mu_\pi}[r] - \beta \KL(\pi \| \pi_{\text{ref}})}_{\text{Has closed-form solution}}, 
\end{align}
where $f^*$ is the convex conjugate function corresponding to the chosen $f$-divergence, $\mu_\pi$ is the state-action distribution of the policy being learned, and $\mu_E$ is the expert's state-action distribution.
\end{theorem}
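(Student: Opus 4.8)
The plan is to obtain the equivalence from the variational (Fenchel-conjugate) representation of the $f$-divergence and then exchange the order of optimization over the policy and over the auxiliary function. First I would recall that for a convex $f$ with $f(1)=0$ and convex conjugate $f^*$, the divergence admits the dual form $D_f(\mu_\pi \| \mu_E) = \sup_{T}\big(\mathbb{E}_{\mu_\pi}[T] - \mathbb{E}_{\mu_E}[f^*(T)]\big)$, where $T$ ranges over state--action functions valued in $\mathrm{dom}(f^*)$ and the supremum is attained at $T = f'\!\big(d\mu_\pi/d\mu_E\big)$. Substituting this into the objective of the theorem and using that $\beta\KL(\pi\|\pi_{\text{ref}})$ does not involve $T$, the left-hand side becomes $\min_\pi \sup_{T}\big(\mathbb{E}_{\mu_\pi}[T] - \mathbb{E}_{\mu_E}[f^*(T)] + \beta\KL(\pi\|\pi_{\text{ref}})\big)$.

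Next I would swap the inner supremum with the outer infimum. The admissible set of occupancy measures $\mu_\pi$ (equivalently, policies together with their induced $\rho$) is convex, being cut out by the linear Bellman-flow constraints of the token-level MDP; on this set the bracketed objective is convex in $\mu_\pi$ (linear in $\mu_\pi$ through $\mathbb{E}_{\mu_\pi}[T]$ plus the convex term $\beta\KL$) and concave in $T$ (linear in $T$ minus the convex term $\mathbb{E}_{\mu_E}[f^*(T)]$), so Sion's minimax theorem applies and yields $\sup_{T}\min_\pi(\cdots)$. Performing the change of variable $r \mapsto -T$ and collecting terms turns the inner minimization into $-\big(\max_\pi \mathbb{E}_{\mu_\pi}[r] - \beta\KL(\pi\|\pi_{\text{ref}})\big)$, so the whole expression becomes $-\min_r\big(\mathbb{E}_{\mu_E}[f^*(-r)] + \max_\pi \mathbb{E}_{\mu_\pi}[r] - \beta\KL(\pi\|\pi_{\text{ref}})\big)$, which is the claimed identity; the inner bracket is then recognized as the KL-regularized RL value, which by the fixed-point lemma equals $V^*(s_0)$ and hence is available in closed form. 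Since every step is an equality (or a tight supremum), the derivation is reversible, which is what is needed for Conclusion 1 later.

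I expect the main obstacle to be the rigor of the minimax exchange: one must ensure the relevant sets can be taken compact, or invoke a version of Sion's theorem / an occupancy-polytope duality argument that applies directly, and one must verify the variational bound is tight in the regime of interest (typically requiring $\mu_\pi \ll \mu_E$ or a limiting argument) and that restricting $T$ to $\mathrm{dom}(f^*)$ loses nothing. For the total-variation choice used afterward this domain is a bounded interval, which is convenient, but other $f$-divergences need care. A secondary bookkeeping point is to make precise, in the token-level MDP, the identification of functions of $(s,a)$ with reward functions so that $\mathbb{E}_{\mu_\pi}[r]$ matches the discounted return up to the $(1-\gamma)$ normalization hidden in the occupancy measure, and similarly to read $\beta\KL(\pi\|\pi_{\text{ref}})$ as the occupancy-weighted conditional KL, so that the inner problem lines up exactly with the closed-form solution quoted from earlier.
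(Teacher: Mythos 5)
Your proposal is correct and follows essentially the same route as the paper's proof: the Fenchel-conjugate variational representation of $D_f$, a convex--concave minimax exchange, the substitution $r=-T$, and the closed-form KL-regularized inner maximization $J(\pi^*)=V^*(s_0)$. The only difference is that you justify the swap more carefully (Sion's theorem on the occupancy polytope, domain and absolute-continuity caveats) than the paper's brief saddle-point remark, which is a refinement rather than a different argument.
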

\begin{proof}
Following the derivation process of non-adversarial distribution matching \cite{Imitation_learning_via_scaleable_RL}, the training objective \ref{obj:Distribution Matching} can be rewritten into a min-max problem with a conjugate function:
\begin{align}
\text{Given that } f^*(g) &= \max_{x \in \text{dom}(f)} \{ xg - f(x) \}, \quad
f(x) = \max_{g \in \text{dom}(f^*)} \{ xg - f^*(g) \}, \\\\
\min_\pi D_f(\mu_\pi \| \mu_E) &+ \beta \KL (\pi \| \pi_{\text{ref}}), \\
&= \min_\pi \int \mu_E f\left( \frac{\mu_\pi}{\mu_E} \right) ds da + \beta \KL (\pi \| \pi_{\text{ref}}), \\
&= \min_\pi \int \mu_E (\max_{g: S \times A \rightarrow \text{dom}(f^*)}\{\frac{\mu_\pi}{\mu_E}g - f^*(g)\}) ds da + \beta \KL (\pi \| \pi_{\text{ref}}), \\
&= \min_\pi \max_{g: S \times A \rightarrow \text{dom}_{f^*}} -\mathbb{E}_{\mu_E}[f^*(g)] + \mathbb{E}_{\mu_\pi}[g] + \beta \KL(\pi \| \pi_{\text{ref}}).
\end{align}
\end{proof}

By substituting $g$ with $-r$, we can rewrite the min-max formulation:
\begin{align}
-\max_\pi \min_{r: S \times A \rightarrow -\text{dom}_{f^*}} \mathbb{E}_{\mu_E}[f^*(-r)] + \mathbb{E}_{\mu_\pi}[r] - \beta \KL(\pi \| \pi_{\text{ref}}).
\end{align}
The target is also a saddle point problem, as the conjugate function exhibits convexity and the KL term can be split into an entropy term that is concave and a cross-entropy term that is linear with respect to $\pi$. Therefore, the order of min-max operations can be exchanged:
\begin{align}
-\min_{r: S \times A \rightarrow -\text{dom}_{f^*}} \max_\pi \mathbb{E}_{\mu_E}[f^*(-r)] + \mathbb{E}_{\mu_\pi}[r] - \beta \KL(\pi \| \pi_{\text{ref}}).
\end{align}

The maximization is only performed on the latter part of the expression and has a closed-form solution according to previous work \cite{from_r_to_q}:
\begin{align}
&\quad -\min_r \mathbb{E}_{\mu_E}[f^*(-r)] + \underbrace{\max_\pi \mathbb{E}_{\mu_\pi}[r] - \beta \KL(\pi \| \pi_{\text{ref}})}_{\text{Has closed-form solution}} ,\label{eq:train_object}
\end{align}

\subsection{SFT is a Special Case}
The final objective is 
\begin{align}
    &-\min_r [\mathbb{E}_{\mu_E}[f^*(-r)] + \underbrace{\max_\pi \mathbb{E}_{\mu_\pi}[r] - \beta \KL(\pi \| \pi_{\text{ref}})}_{\text{Has closed-form solution}}], \\
    &= -\min_r \mathbb{E}_{\mu_E}[f^*(-r)] + V_\pi(s_0).
\end{align}
Throughout the training process, the reward maintains the equation \eqref{eq: r_and_pi} and the policy $\pi$ is always the optimal solution of the latter parts. When choosing the Total Variation distance, the conjugate function is $f^*(t) = t$, and the training objective is just the MLE term shown below:
\begin{align}
& \quad \; -\min_r \mathbb{E}_{\mu_E}[f^*(-r)] + V_\pi(s_0),\\
&= \max_r \mathbb{E}_{\mu_E}[-f^*(-r)] - V_\pi(s_0), \\
&= \max_r \mathbb{E}_{\mu_E}[r] - V_\pi(s_0), \\
&= \max_r \mathbb{E}_{\mu_E}[\log \frac{\pi(y|x)}{\pi_{\text{ref}}(y|x)}+ V_\pi(s_0) - V_\pi(s_t)] - V_\pi(s_0), \\
&= \max_\pi \mathbb{E}_{\mu_E}[\beta \underbrace{\log \pi(y|x)}_{\text{MLE}} - \underbrace{\log\pi_{\text{ref}}(y|x)}_{\text{constant}} - V_\pi(s_{t})].
\end{align}

\subsection{SFT Maintains Intrinsic Expected Return Estimation}\label{app:SFT_intrinsic_return}
\begin{theorem}[Intrinsic Expected Return] \label{theorem: intrinsicreturn}
During the SFT process, the logits $l_a$ of a language model correspond to the Q-function $Q(s, a)$ of the learned implicit reward:

\begin{align}
    l_a = Q_{\hat{r}}(s, a) + C(s) = \hat{r}(s, a) + \gamma\mathbb{E}_{s_{t+1}\sim P(\cdot|s,a)}[V_{\hat{r}}(s_{t+1})],
\end{align}
where $\hat{r}$ is the model's implicit reward function satisfying eq. \eqref{eq: r_and_pi}, $\gamma$ is the discount factor, and $V_{\hat{r}}(s_{t+1})$ is the value function of the next state. $C(s)$ is a function conditioned only on the state.
\end{theorem}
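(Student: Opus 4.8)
The plan is to read the logits-to-$Q$ correspondence directly off the softmax output layer of the language model, combine it with the max-entropy fixed-point equations (the lemma at the start of Appendix~\ref{app:theory_proof_start}), and close the chain with the soft Bellman backup that defines the $Q$-function in this framework. The conceptual crux is supplied by Conclusion~1: at every point along the SFT trajectory the current model $\pi$ is the optimal policy for its own implicit reward $\hat r$ (the reward attached to $\pi$ through eq.~\eqref{eq: r_and_pi}), so the structural identities obeyed by an optimal max-entropy policy apply verbatim to $\pi$.

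Concretely, first I would write the token distribution of the model as the softmax of its logits, $\pi(a \mid s) = \exp(l_a)/\sum_{a'}\exp(l_{a'})$, hence $\log\pi(a\mid s) = l_a - \log\sum_{a'}\exp(l_{a'})$. Next I would invoke eq.~\eqref{eq: fixpointprob}: since $\pi$ is optimal for $\hat r$, we also have $\log\pi(a\mid s) = \tfrac{1}{\beta}\bigl(Q_{\hat r}(s,a) - V_{\hat r}(s)\bigr)$. Equating the two expressions for $\log\pi(a\mid s)$ and solving for $l_a$ gives $l_a = \tfrac{1}{\beta}Q_{\hat r}(s,a) + C(s)$, where $C(s) := \log\sum_{a'}\exp(l_{a'}) - \tfrac{1}{\beta}V_{\hat r}(s)$ depends on the state but not on the action $a$; with the temperature normalization $\beta = 1$ (or after rescaling the logits by $\beta$) this is exactly $l_a = Q_{\hat r}(s,a) + C(s)$. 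To complete the chain of equalities in the statement I would then substitute the soft Bellman relation $Q_{\hat r}(s,a) = \hat r(s,a) + \gamma\,\mathbb{E}_{s_{t+1}\sim P(\cdot\mid s,a)}[V_{\hat r}(s_{t+1})]$, the defining recursion of the soft $Q$-function of $\hat r$; because the token-level MDP has deterministic transitions $s_{t+1} = s\oplus a$, the expectation is over a point mass and the formula matches the claim.

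The main obstacle is the step that licenses eq.~\eqref{eq: fixpointprob}, namely the claim that $\pi$ genuinely is an optimal max-entropy policy for $\hat r$ at each stage of SFT. This is not automatic: it rests on Assumption~\ref{assumption:init_r} for the starting point and on the reversibility of the derivation in Section~\ref{SFT_is_special_case}, which is what guarantees that a gradient step on the SFT objective keeps the model–reward pair inside the optimal policy–reward subspace where eq.~\eqref{eq: r_and_pi}, and hence the fixed-point equations, continue to hold. A second, more cosmetic issue is the treatment of temperature: a vanilla softmax carries no $\beta$, so the identity holds only up to the scalar $\beta$ unless one fixes a normalization convention, which I would state explicitly. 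The remaining ingredients—the softmax algebra and the Bellman expansion—are routine, so the proof reduces to carefully spelling out these two points.
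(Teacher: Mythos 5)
Your proposal is correct and follows essentially the same route as the paper's proof: equate the softmax-of-logits form of $\pi(a\mid s)$ with the max-entropy optimal-policy form $\exp(Q_{\hat r}(s,a)/\beta)$ (justified by the claim that the SFT model stays optimal for its own implicit reward), deduce the affine relation $l_a = \tfrac{\tau}{\beta}Q_{\hat r}(s,a) + C(s)$, and read off the Bellman expansion. Your explicit formula for $C(s)$, the deterministic-transition remark, and the flagged $\tau$/$\beta$ normalization are slightly more careful than the paper's version but do not change the argument.
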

\begin{proof}
    The language model represents token probabilities through a softmax operation over logits $l$:
    \begin{align}
    p(a_i|s) = \frac{e^{l_i/\tau}}{\sum_j e^{l_j/\tau}},
    \end{align}
    where $\tau$ is the temperature parameter, typically set to 1 during training.
    
    As previously discussed, the model represents the optimal policy under some implicit reward function. The probability distribution for this optimal policy satisfies:
    \begin{align}
    p(a_i|s) = \frac{e^{Q(s,a_i)/\beta}}{\sum_j e^{Q(s,a_j)/\beta}}.
    \end{align}
    
    Equating these expressions, we have:
    \begin{align}
    \frac{e^{l_i/\tau}}{\sum_j e^{l_j/\tau}} &= \frac{e^{Q(s,a_i)/\beta}}{\sum_j e^{Q(s,a_j)/\beta}}, \\
    e^{l_i/\tau} &= e^{Q(s,a_i)/\beta} \cdot \frac{\sum_j e^{l_j/\tau}}{\sum_j e^{Q(s,a_j)/\beta}}, \\
    e^{l_i/\tau} &= k \cdot e^{Q(s,a_i)/\beta}, \\
    l_i &= \frac{\tau}{\beta}Q(s,a_i) + C(s),
    \end{align}
    
    where $\beta$ is the KL divergence coefficient. The relationship shows that logits $l_i$ have a linear mapping to the Q-values, with $C(s)$ depending only on the state since it incorporates terms summed over all possible actions. We hypothesize that $C(s)$ remains numerically similar across different states, and we provide supporting evidence for this in the experimental section.
\end{proof}

\newpage

\section{Different f-Divergence Leads to Different Loss Format} \label{app: different_f_divergence_sft}
In this section, we present the detailed formulation of loss functions derived from different $f$-divergences.

\begin{center}
\renewcommand{\arraystretch}{2.0}  
\scalebox{0.85}{%
\begin{tabular}{>{\centering\arraybackslash}p{2.5cm}
                >{\centering\arraybackslash}p{5cm}
                >{\centering\arraybackslash}p{3cm}
                >{\centering\arraybackslash}p{4cm}}
\toprule
\textbf{Name} & \textbf{$D_f(P\|Q)$} & \textbf{Conjugate $f^*(t)$} & \multicolumn{1}{c}{\textbf{Training Target}} \\
\cmidrule(l){4-4}
 & & & \multicolumn{1}{c}{$\max_\pi \mathbb{E}_{\mu_E}[\cdot]$} \\
\midrule
Total variation
& $\frac{1}{2} \int |p(x)-q(x)|\,\textrm{d}x$
& $t$
& $\beta \log \pi(y|x) - \log\pi_{\text{ref}}(y|x)- V_\pi(s_{t})$
\\[2ex]
& & & \\[-1.5ex]
Kullback-Leibler\\(KL)
& $\int p(x) \log \frac{p(x)}{q(x)} \,\textrm{d}x$
& $\exp(t-1)$
& $\begin{array}{c}
[-\pi_{ref}(y|x) + \pi(y|x)] \cdot \\
\exp(-\Delta V - 1) - V_\pi(s_0)
\end{array}$
\\[2ex]
& & & \\[-1.5ex]
Reverse KL
& $\int q(x) \log \frac{q(x)}{p(x)}\,\textrm{d}x$
& $-1-\log(-t)$
& $\begin{array}{c}
1 + \log(\log\frac{\pi(y|x)}{\pi_{ref}(y|x)} + \Delta V) \\
- V_\pi(s_0)
\end{array}$
\\[2ex]
& & & \\[-1.5ex]
Pearson $\chi^2$
& $\int \frac{(q(x)-p(x))^2}{p(x)}\,\textrm{d}x$
& $\frac{1}{4} t^2 + t$
& $\begin{array}{c}
-\frac{1}{4} (\log\frac{\pi(y|x)}{\pi_{ref}(y|x)} + \Delta V) ^2 \\
+ \log\frac{\pi(y|x)}{\pi_{ref}(y|x)} + \Delta V
\end{array}$
\\[2ex]
& & & \\[-1.5ex]
Squared Hellinger
& $\int\left(\sqrt{p(x)} - \sqrt{q(x)}\right)^2 \,\textrm{d}x$
& $\frac{t}{1-t}$
& $1 - \frac{1}{1 + \log\frac{\pi(y|x)}{\pi_{ref}(y|x)} + \Delta V}$
\\[2ex]
& & & \\[-1.5ex]
Jensen-Shannon
& $\begin{array}{c}
\frac{1}{2} \int p(x) \log \frac{2 p(x)}{p(x)+q(x)} + \\
q(x) \log \frac{2 q(x)}{p(x) + q(x)}\,\textrm{d}x
\end{array}$
& $- \log(2-\exp(t))$
& $\begin{array}{c}
\log(2 - \frac{\pi_{ref}(y|x)}{\pi(y|x)} \cdot \\
\exp(-\Delta V))
\end{array}$
\\[2ex]
& & & \\[-1.5ex]
GAN
& $\begin{array}{c}
\int p(x) \log \frac{2 p(x)}{p(x)+q(x)} + \\
q(x) \log \frac{2 q(x)}{p(x) + q(x)}\,\textrm{d}x - \log(4)
\end{array}$
& $- \log(1-\exp(t))$
& $\begin{array}{c}
\log(1 - \frac{\pi_{ref}(y|x)}{\pi(y|x)} \cdot \\
\exp(-\Delta V))
\end{array}$
\\
\bottomrule
\end{tabular}
}%
\end{center}
\captionof{table}{\small For different $f$-divergences, we list their corresponding conjugate functions $f^*$ and the derived training targets. We use the symbol $\Delta V$ as an abbreviation for $V_\pi(s_0) - V_\pi(s_t)$ for simplicity of notation.}
\label{tab:f-divergence-act}
\section{Multi-Object}

\label{exp:DPO_SFT_Multiobject}

Since both the SFT process and DPO process model the implicit reward, we investigate whether they can be formulated as multi-objective learning rather than sequential training. We approach this using Lagrangian methods. 
Specifically, we formulate the multi-objective learning process using the Lagrangian multiplier method. We assume that during the SFT process, the accuracy of the DPO loss should also be kept as low as possible. We formulate this with the following objective:
\begin{align}
     &\min_\pi \text{SFT loss}, \text{ s.t., } (\text{accuracy of (chosen, reject)} - \text{target acc}) < \delta, \\
     &\min_\pi \text{SFT loss} + \lambda \cdot \text{DPO loss}.
\end{align}

Where $\delta$ is the hyperparameter. Following the process of PPO, the parameter $\lambda$ is adjusted to $\frac{\lambda}{2}$, when $\text{acc} - \text{target acc} < \delta$ and is set to $2 \cdot \lambda$ otherwise. 

We train the model directly on the UltraFeedback dataset, and some training metrics during one training are shown below.

\begin{figure}[h]
    \centering
    \includegraphics[width=\linewidth]{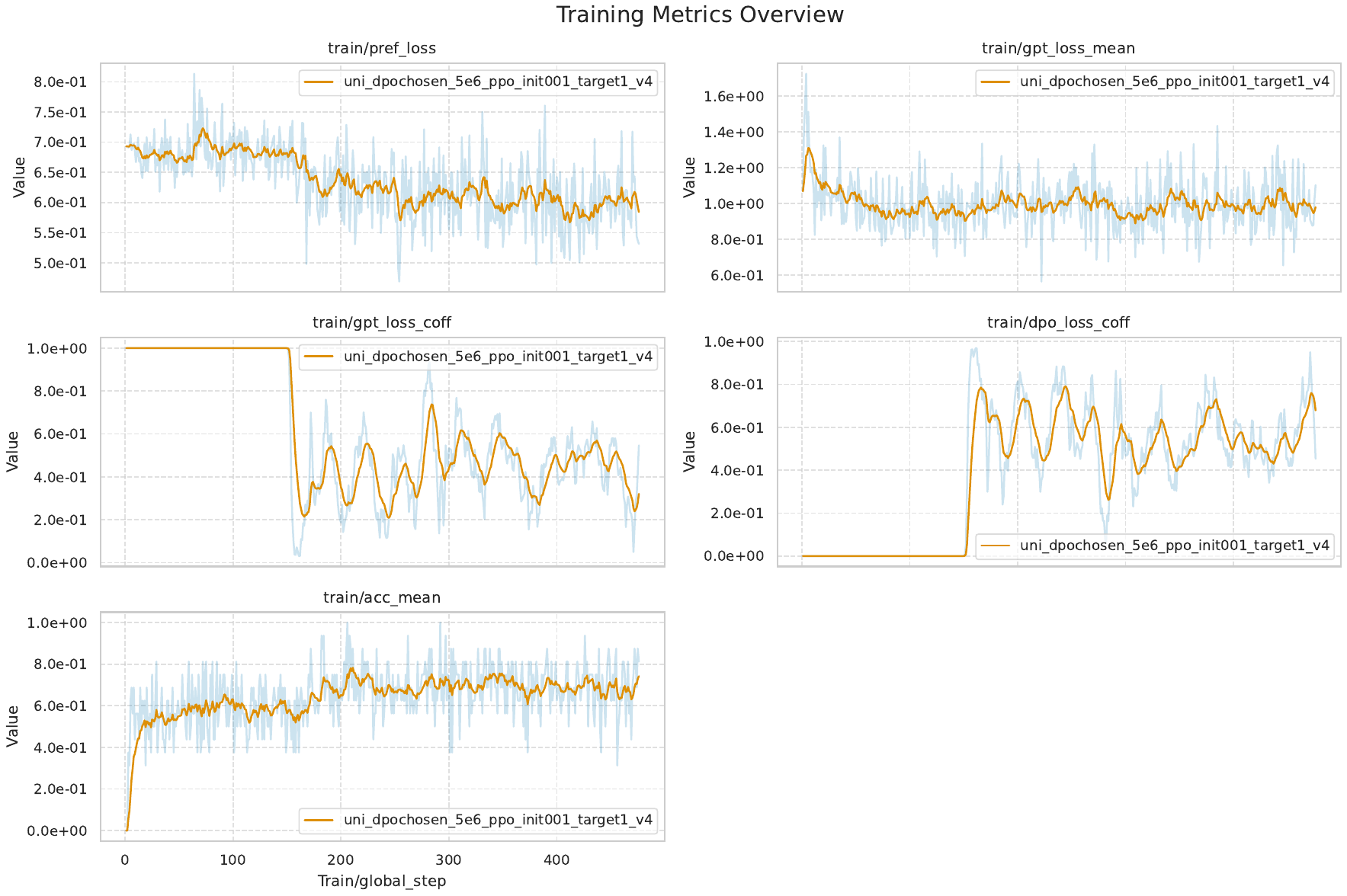}
    \caption{Training Metric Visualization for Multi-object Learning}
    \label{fig:sft}
    \vspace{-4pt}
\end{figure}

We also experiment with various hyperparameters, including different growth and decay coefficients for $\lambda$, but all of these configurations lead to poor downstream results. We think that our failure appears consistent with the theory presented in \cite{Learning_dynamic}. As the SFT loss coefficient increases, it tends to increase probability mass in negative regions, thus diminishing the effectiveness of the squeezing effect. We validate in other experiments through interleaved experiments, where we split the SFT and DPO data into four segments and train them in an interleaved manner. The results are shown below.
\begin{table}[h]
\centering
\caption{
Results for interleaved SFT and DPO training. The table shows the performance after each training stage in terms of Win Rate and LC Win Rate percentages.
}
\begin{tabular}{lccc}
\toprule[1.5pt]
\multirow{2}{*}{\textbf{Training Stage}} & \multicolumn{2}{c}{\textbf{AlpacaEval 2}} \\
\cmidrule(lr){2-3}
& {\scriptsize \bf Win (\%)} & {\scriptsize \bf LC-Win (\%)} \\
\midrule[1pt]
\rowcolor{mygrey}
\multicolumn{3}{c}{\textit{Interleaved Training}} \\
SFT 1 & 5.28 & 4.86 \\
DPO 1 & 7.03 & 8.11 \\
SFT 2 & 5.06 & 4.63 \\
DPO 2 & 11.27 & 11.94 \\
SFT 3 & 8.94 & 4.83 \\
DPO 3 & 12.94 & 13.30 \\
SFT 4 & 8.44 & 5.13 \\
DPO 4 & 8.24 & 5.02 \\
\bottomrule[1pt]
\end{tabular}
\label{tab:interleaved_training}
\end{table}

It can be observed that after each DPO stage, subsequent SFT training experiences a significant performance drop, indicating that SFT and DPO objectives are inherently conflicting in our experimental setting.

However, our failure does not necessarily mean this approach is fundamentally infeasible. We have not yet explored its effectiveness in other domains, and we leave this investigation to future work. We hypothesize that in tasks where expert trajectories more closely align with SFT's assumption that the training data represents optimal behavior, the results may be more promising.

\end{document}